\title{The Logical Expressiveness of Temporal GNNs 
\\
via Two-Dimensional Product Logics
}
\author{%
  Marco S\"alzer \\
  RPTU Kaiserslautern-Landau\\
  Kaiserslautern, Germany\\
  \texttt{marco.saelzer@rptu.de}\\
  \And 
  Przemys{\l}aw Andrzej Wa{\l}\k{e}ga \\
  Queen Mary University of London, UK \\ 
University of Łódź, Poland \\
  \texttt{p.walega@qmul.ac.uk}
  \And 
  Martin Lange \\
  Theoretical Computer Science / Formal Methods \\
  University of Kassel, Germany \\
  \texttt{martin.lange@uni-kassel.de}
}
\pgfplotsset{compat=1.18}
\newtheorem{theorem}{Theorem}
\newtheorem{lemma}{Lemma}
\newtheorem{corollary}[theorem]{Corollary}
\newtheorem{definition}[theorem]{Definition}
\definecolor{mygreen}{HTML}{9BF3AA}
\definecolor{myred}{HTML}{F39B9B}
\definecolor{myblue}{HTML}{9BB0F3}
\definecolor{mywhite}{HTML}{FFFFFF}
  \definecolor{SoftBlue}{HTML}{89CFF0}
  \definecolor{MintGreen}{HTML}{98FF98}
  \definecolor{LightCoral}{HTML}{F08080}
  \definecolor{SoftPurple}{HTML}{D8BFD8}
  \definecolor{PaleYellow}{HTML}{FFFFE0}
  \definecolor{SkyBlue}{HTML}{87CEEB}
  \definecolor{Peach}{HTML}{FFDAB9}
  \definecolor{Lavender}{HTML}{E6E6FA}
  \definecolor{LightPink}{HTML}{FFB6C1}
  \definecolor{Cream}{HTML}{FFFDD0}
  \definecolor{Aqua}{HTML}{00FFFF}
  \definecolor{Chartreuse}{HTML}{7FFF00}
  \definecolor{Crimson}{HTML}{DC143C}
  \definecolor{DarkOrange}{HTML}{FF8C00}
  \definecolor{Goldenrod}{HTML}{DAA520}
  \definecolor{HotPink}{HTML}{FF69B4}
  \definecolor{IndianRed}{HTML}{CD5C5C}
  \definecolor{LightSeaGreen}{HTML}{20B2AA}
  \definecolor{MediumPurple}{HTML}{9370DB}
  \definecolor{Salmon}{HTML}{FA8072}
\definecolor{LightGray}{RGB}{150,150,150}
\definecolor{Green1}{RGB}{171,255,177}
\definecolor{Green2}{RGB}{65,243,77}
\definecolor{Green3}{RGB}{120,165,101}
\definecolor{Brown}{RGB}{243,190,130}
\definecolor{Blue1}{RGB}{93,114,164}
\definecolor{Blue2}{RGB}{213,223,245}
\definecolor{Blue3}{RGB}{42,69,176}
\definecolor{Blue4}{RGB}{240,246,255}
\definecolor{Blue5}{RGB}{50,100,200}
\definecolor{Red1}{RGB}{255,89,100}
\definecolor{Light}{RGB}{102,205,170}
\newcommand{\N}{\ensuremath{\mathbb{N}}}
\newcommand{\Q}{\ensuremath{\mathbb{Q}}}
\newcommand{\R}{\ensuremath{\mathbb{R}}}
\newcommand*{\ldblbrace}{\{\mskip-5mu\{}
\newcommand*{\rdblbrace}{\}\mskip-5mu\}}
\newcommand{\con}{\ensuremath{\,||\,}}
\newcommand{\G}{\ensuremath{G}}
\newcommand{\x}{\ensuremath{\mathbf{x}}}
\newcommand{\h}{\ensuremath{\mathbf{h}}}
\newcommand{\com}{\ensuremath{\mathsf{comb}}}
\newcommand{\agg}{\ensuremath{\mathsf{agg}}}
\newcommand{\msg}{\mathsf{msg}}
\newcommand{\out}{\ensuremath{\mathsf{out}}}
\newcommand{\trelu}{\ensuremath{\mathsf{trReLU}}}
\newcommand{\FNN}[1]{\mathsf{FNN}[#1]}
\newcommand{\TG}{\ensuremath{TG}}
\newcommand{\model}{\ensuremath{T}}
\newcommand{\idTGNN}{\mathcal{T}_{\textsf{rec}}}
\newcommand{\tandgTGNN}{\mathcal{T}_{\textsf{TandG}}}
\newcommand{\cell}{\mathit{Cell}}
\newcommand{\globTGNN}{\mathcal{T}_{\textsf{glob}}}
\newcommand{\sub}{\mathit{sub}} 
\newcommand{\prev}[1]{\mathsf{Y}_{#1}}
\newcommand{\past}{\mathsf{P}}
\newcommand{\PTL}{\ensuremath{\mathsf{PTL}}}
\newcommand{\PTLp}{\ensuremath{\mathsf{PTL}_{\past{},\prev{}}}}
\newcommand{\PTLK}{\ensuremath{\mathsf{PTL}_{\past{},\prev{}}{\times}\mathsf{K}}}
\newcommand{\K}{\ensuremath{\mathsf{K} }}
\newcommand{\logbenedikt}{\mathcal{L}\text{-}\mathsf{MP}^2}
\newcommand{\tlogbenedikt}{\mathsf{PTL}_{\past{},\prev{}}{\times}(\logbenedikt)} 
\newcommand{\lognunn}{K^{\#}}
\newcommand{\tlognunn}{\mathsf{PTL}_{\past{},\prev{}}{\times}K^{\#}}
\newcommand{\tandg}{\mathsf{TandG}}
\begin{document}

\maketitle

\begin{abstract}
In recent years, the expressive power of various neural architectures---including graph neural networks (GNNs), transformers, and recurrent neural networks---has been characterised using tools from logic and formal language theory. As the capabilities of basic architectures are becoming well understood, increasing attention is turning to models that combine multiple architectural paradigms. Among them particularly important, and challenging to analyse, are temporal extensions of GNNs, which integrate both spatial (graph-structure) and temporal (evolution over time) dimensions. In this paper, we initiate the study of logical characterisation of temporal GNNs by connecting them to two-dimensional product logics. We show that the expressive power of temporal GNNs depends on how graph and temporal components are combined. In particular, temporal GNNs that apply static GNNs recursively over time can capture all properties definable in the product logic of (past) propositional temporal logic PTL and the modal logic K. In contrast, architectures such as graph-and-time TGNNs and global TGNNs can only express restricted fragments of this  logic, where the interaction between temporal and spatial operators is syntactically constrained. These provide us with the first results on the logical expressiveness of temporal GNNs.

\end{abstract}

\section{Introduction}
\label{sec:intro}

Recent years have seen significant progress in understanding the expressive power of neural architectures using tools 
from logic, formal language theory, and graph theory. 
Some of the most prominent results concern \emph{Graph Neural Networks} (GNNs),
whose distinguishing power has been famously characterised with the Weisfeiler-Leman isomorphism 
test \cite{morris2019weisfeiler,DBLP:conf/iclr/XuHLJ19} and whose logical expressiveness has been 
captured with modal and first-order logics \cite{BarceloKM0RS20, Grohe21, NunnSST24, BenediktLMT24, ahvonen2024logical}. 
These insights have revealed both the limitations and strengths of GNNs,  inspiring the development of more expressive variants 
such as  higher-order GNNs~\cite{morris2019weisfeiler}. 
They have also opened the way for extracting logical rules from GNNs, 
advancing explainability in graph-based learning \cite{DBLP:conf/kr/CucalaGMK23,DBLP:conf/kr/CucalaG24}. 
Consequently, the logical expressiveness of GNNs has become a rapidly evolving research area.

As the capabilities of standard GNN architectures become relatively well understood, research interest is shifting to more complex architectures, which combine multiple dimensions of structure. One particularly prominent and challenging case is that of 
\emph{Temporal Graph Neural Networks} (TGNNs) \cite{LongaLSBLLSP23,DBLP:journals/access/SkardingGM21,GR22}, which can be seen as an extension of 
GNNs enabling to process \emph{temporal graphs}, that is, graphs whose topology evolves over time. 
As a result, TGNN computations combine the spatial  (graph structure) with temporal (changes in time) dimensions.
Considerable advancements have been made in the design of various
TGNN architectures \cite{LongaLSBLLSP23} and their deployment across diverse applications
such as traffic forecasting, financial applications, and epidemiological contexts
\cite{YuYZ18, ParejaDCMSKKSL20, Kapoor20}.
However 
understanding  expressive capabilities of TGNNs remains limited, with the first steps only comparing some TGNN architectures \cite{GR22, chen2023calibrate} or establishing relations with temporal versions of the Weisfeiler-Leman test \cite{SouzaMKG22,WR24}.
To the best of our knowledge no connection of TGNNs with logics has been established prior to this work.

\paragraph{Our contribution.}
In this work, we initiate the analysis of the logical expressiveness of TGNNs.
Our goal is to characterise the temporal graph properties that TGNNs can express using logical languages.
To achieve this, we propose a novel approach---analysing the logical expressiveness of TGNNs using 
two-dimensional product logics. 
This is based on the key insight that the computations of TGNNs, which process both the 
structural properties of static graphs and their evolution over time, naturally correspond to combinations of modal and temporal properties.

Both modal and temporal logics have been extensively studied in the logic community for decades, and  more recent work has connected them to the expressiveness of neural architectures \cite{BarceloKM0RS20, NunnSST24,DBLP:conf/iclr/BarceloKLP24,DBLP:journals/corr/abs-2404-04393}.
Likewise, many-dimensional logics, especially products of modal and temporal logics, have a rich theoretical foundation and well-established tools for analysing definability and complexity \cite{kurucz2003many,DBLP:journals/apin/BennettCWZ02,DBLP:conf/time/LutzWZ08}.
We are the first  to apply product logics as a framework for analysing the expressive power of TGNNs.


At a technical level, we show that certain TGNN architectures are 
(in-)capable of expressing certain combinations of modal-temporal properties. Our results are 
as follows:
\begin{itemize}[leftmargin=1em]
\item The class $\idTGNN[\hat{\mathcal{M}}]$  of TGNNs recursively applying  standard static GNNs (from class $\hat{\mathcal{M}}$)  over time, 
can express all properties definable in the product logic  $\PTLK$, which combines the seminal (past) 
propositional temporal logic $\mathsf{PTL}_{\mathsf{P}, \mathsf{Y}}$ and the modal logic $\mathsf{K}$ (Theorem~\ref{sec:logic2tgnn;thm:ptlk}).

\item Analogous results hold if we replace  $\hat{\mathcal{M}}$ and  $\mathsf{K}$ with matching expressiveness GNNs and  logics.
In particular, we show that such results hold for two specific pairs of static GNNs and logics, recently studied in literature \cite{NunnSST24,BenediktLMT24} (Theorem~\ref{sec:log2tgnn;thm:benediktnunn}).



\item
In contrast to recursive TGNNs, the class of time-and-graph TGNNs \cite{GR22} does not allow us to express all properties expressible in $\PTLK$ 
(Theorem~\ref{th:TandGweak}).
We show, however, that time-and-graph TGNNs can express all properties definable in a fragment of $\PTLK$, in which the allowed interplay between temporal and modal operators is syntactically restricted (Theorem~\ref{sec:variants_logic2tnn;thm:tandg}).


\item The class of global TGNNs \cite{WR24} also does not allow us to express all properties expressible in $\PTLK$ (Theorem~\ref{thm:globTGNNweak}).
As in the previous case,
we determine a fragment of $\PTLK$, such that global TGNNs  can express all properties definable in this fragment (Theorem~\ref{sec:variants_logic2tnn;thm:globaltgnn}).
This fragment, however, allows for different interactions between temporal and modal operators, than the fragment from Theorem~\ref{th:TandGweak}.


\item We show how our results allow to determine relative expressive power of TGNN classes considered in the paper  (\Cref{cor:exprrelation} and 
\Cref{thm:idstrongest}).

\end{itemize}

Beyond their theoretical significance, these results open the door to novel avenues in explainable 
and trustworthy AI. Just as logical characterisations of static GNNs have enabled the extraction of 
symbolic rules from neural models \cite{DBLP:conf/kr/CucalaGMK23,DBLP:conf/kr/CucalaG24}, our analysis 
lays the foundation for extracting temporal rules and dynamic specifications from TGNNs. 


\section{Related work}
\label{sec:intro;sub:related}

\paragraph{Temporal GNNs.}
There exists a plethora of TGNN models \cite{LongaLSBLLSP23}, differing on representation and processing  temporal  graphs.
There exist various classifications of TGNNs \cite{LongaLSBLLSP23}, but one of the main distinction is between snapshot-based models, where a temporal graph is given as a sequence of its timestamped snapshots
\cite{HajiramezanaliH19, SankarWGZY20, MicheliT22, YouDL22, Cini_Marisca_Bianchi_Alippi_2023},
and event-based models, where a temporal graph is given as a sequence of events modifying the graph structure \cite{XuRKKA20, rossi2020temporal, Luo22a}.
In this paper we  focus on the snapshot-based TGNNs, but it is worth to observe that in many settings these two types of temporal graph representation can be translated into each other.
Initial work on the expressive power of TGNNs has been conducted exploiting their relation to 
variants of the
Weisfeiler-Leman isomorphism test \cite{SouzaMKG22,WR24}
and by directly comparing expressiveness of particular  TGNN architectures \cite{GR22,chen2023calibrate}.
However, to the best of our knowledge, no work has yet studied the expressive power of TGNNs from the perspective 
of logics.


\paragraph{Logical expressiveness of static GNNs.}
The seminal logical characterisation of (message-passing) GNNs established that the properties expressible both by GNNs and first-order logic coincide with those definable in graded modal logic \cite{BarceloKM0RS20}. 
Subsequent work moved beyond the assumption of first-order expressibility, aiming to identify logics that capture the full expressive power of GNNs. This line of research led to the introduction of logics such as the modal logic 
$\mathsf{K}^{\#}$
\cite{NunnSST24} and logics extended with Presburger quantifiers \cite{BenediktLMT24}. More recent results have provided logical characterisations of further GNN architectures using logics 
with quantised parameters \cite{ahvonen2024logical, SST25} or  standard modal logics~\cite{grau2025correspondence}.
There are also results on the logical expressiveness of GNNs extended with global readouts \cite{BarceloKM0RS20,hauke2025aggregate}.

\paragraph{Product logics} 
Multi-dimensional product logics \cite{kurucz2003many,DBLP:books/daglib/0093551} offer a general framework for combining multiple modal logics, each capturing distinct aspect of reasoning such as space, time, or knowledge. 
In these systems, states are represented as tuples drawn from the component logics, and accessibility relations are defined componentwise, enabling interaction between different modal dimensions.
There is a long-standing tradition of studying the complexity and axiomatisation of multi-dimensional logics—a line of research that often proves to be highly challenging \cite{kurucz2003many,DBLP:books/daglib/0093551}. 
Two-dimensional product logics, in particular, have been extensively investigated due to their applicability in spatio-temporal \cite{DBLP:journals/apin/BennettCWZ02}, temporal-epistemic \cite{DBLP:journals/jcss/HalpernV89},
temporal-standpoint \cite{DBLP:conf/kr/GiganteAL23,demri2024computational},
and temporal description logic \cite{DBLP:journals/amai/ArtaleF00} settings.
Of particular relevance to our work is the two-dimensional product logic $\mathsf{PTL} \times \mathsf{K}$, which combines propositional temporal logic (PTL) \cite{Pnueli:1977} with the modal logic $\mathsf{K}$ \cite{Kripke1972-KRINAN-2}. This logic, along with its various extensions and variants, has been the subject of extensive study in the literature~\cite{kurucz2003many}.









\section{Notation}
\label{sec:fund}

We will briefly describe  basic notions, notation, and conventions used in the paper.
%
%

We use bold symbols, such as $\boldsymbol{x}$ and $\boldsymbol{y}$,
to represent vectors; we  assume that they are given as column vectors.
In particular, we use $\boldsymbol{0}$ for vectors containing only 0s; their dimension will usually be clear from context.
We let $\boldsymbol{x}_i$ be the 
$i$th element of $\boldsymbol{x}$,
and 
$\boldsymbol{x} \con \boldsymbol{y}$ the concatenation of $\boldsymbol{x}$ and $\boldsymbol{y}$, that is,  a vector obtained by stacking 
$\boldsymbol{x}$ onto $\boldsymbol{y}$. 
We use $\ldblbrace \dotsb \rdblbrace$ to denote multisets, that is, sets with possibly multiple occurrences of elements.

A (static, undirected, node-labelled) \emph{graph}, $G = (V, E, c)$, is a triple consisting of  a finite set $V$  of nodes,
a set $E$  of undirected edges over $V$, 
and a labelling $c: V \to \R^k$ of 
nodes  with vectors in $\R^k$.
If $c: V \to \{0,1\}^k$, we call vector entries 
 \emph{colours} and say that the graph is \emph{coloured} (with $k$ colours).
A \emph{pointed graph} is a pair $(G,v)$ consisting of a graph and one of its nodes.

A \emph{temporal graph}, $\TG = (\G_1,t_1), \dots, (\G_n,t_n)$, is a finite sequence of pairs $(G_i, t_i)$,
where $\G_i=(V_i,E_i,c_i)$ is a static graph such that $V_1 = \dotsb = V_n$ and $t_i \in \R$ is a number, 
also called a \emph{timestamp}, such that $t_1 < \dotsb < t_n$. We call $n$ the \emph{length} of $\TG$.
We usually refer to the set of nodes of $\TG$ as $V$.
We call a temporal graph \emph{discrete}, if 
 $t_i = i$ for each timestamp $t_i$.
A \emph{timestamped node} is a pair $(v, t_i)$ of a node and a timestamp; 
a \emph{pointed temporal graph} is a pair 
$(\TG,(v,t_i))$ of a temporal graph and a timestamped node. 
Often, we will write $(\TG,v)$ instead of $(\TG,(v,t_n))$.

\section{Message-passing GNNs and TGNNs}
\label{sec:tgnn}

In this section, we briefly present the standard message-passing GNNs \cite{GilmerSRVD17}, referred to as message passing neural networks (MPNNs) here, but also 
called aggregation-combine GNNs (AC-GNNs)~\cite{BarceloKM0RS20}.
Afterwards, we present three ways of extending them to the temporal setting, which gives rise to the three classes of temporal GNNs we will study in this paper.


\begin{definition}
    \label{sec:tgnn;def:mplayer}
A \emph{message-passing GNN} (MPNNs), $M = (l^1, \dotsc, l^k)$, is a finite sequence of 
\emph{message-passing layers}  of the form $l^i = (\com^i, \agg^i)$, where $\com^i$
are \emph{combination} functions mapping pairs of vectors to single vectors, and $\agg^i$ are  \emph{aggregation} functions mapping multisets of vectors to single vectors. 
An application of $M$ to a graph $G=(V, E, c)$ yields embeddings  $\h^{(i)}_v$ computed for all $v \in V$ and $i +1 \leq k$ as follows:
$$
\h^{(0)}_v = c(v),
\qquad 
\h^{(i+1)}_v = \com^{i+1}(\h^{(i)}_v,  \agg^{i+1} ( \ldblbrace \h^{(i)}_u \mid \{v,u\} \in E \rdblbrace) ).
$$
For a pointed graph $(G,v)$, we let $M(G,v) = \h^{(k)}_v$ be the final embedding computed for $v$.
\end{definition}

Depending on the type of combination and aggregation functions we obtain various classes of MPNNs, usually written as 
$\mathcal{M}$.
In particular, we use $\hat{\mathcal{M}}$ for the class of MPNNs whose 
combination functions are realised by
feedforward neural networks with truncated-ReLU $\max(0,\min(1,x))$ as activation function and with aggregation  given by the entrywise sum $\agg(S) = \sum_{\boldsymbol{x}\in S} \boldsymbol{x}$.
We denote the class of these FNN by $\FNN{\trelu}$ (see Appendix~\ref{def:fnn} for a formal definition).
Another class of MPNNs we consider is $\hat{\mathcal{M}}_\msg$ which is similar to $\hat{\mathcal{M}}$, but 
aggregations 
are now given by $\agg(S) = \sum_{\boldsymbol{x} \in S} \msg(\boldsymbol{x})$, where $\msg$ is a feedforward neural network from
$\FNN{\trelu}$.


Next, we discuss three classes of TGNNs obtained by extending MPNNs in order to process temporal graphs $\TG = (\G_1,t_1), \dots, (\G_n,t_n)$. The first type, recursive TGNNs, starts processing $\TG$ by applying an MPNN to the first 
static graph $\G_1$.
It then extends $\G_2$ by concatenating node labels with the embeddings computed by the MPNN.
Next, it applies the same MPNN to the obtained static graph.
This process is applied recursively in $n$ rounds, until all graphs $\G_1, \dots, \G_n$ are processed.
The formal definition of such models is given below.


\begin{definition}
    \label{sec:tgnn;def:idealtgnn}
\emph{Recursive TGNNs} $\idTGNN[\mathcal{M}]$, for a class $\mathcal{M}$ of MPNNs,
are pairs $\model = (M, \out)$, where $M \in \mathcal{M}$ 
and $\out$ is an \emph{output} function mapping vectors to binary values in $\{0,1\}$.
An application of $\model = (M, \out)$ to a temporal graph 
$\TG = (\G_1,t_1), \dots, (\G_n,t_n)$ with $\G_j=(V_j,E_j,c_j)$,
yields the following  embeddings for all $v \in V$ and $j+1 \leq n$:
\begin{align*}
 \h_v^{(0)}(t_1) & = c_1(v) \con \boldsymbol{0}, \qquad
 \h_v^{(0)}(t_{j+1})  = c_{j+1}(v) \con \h_v^{(k)}(t_{j}),
\\
\h_v^{(k)}(t_j) & = M((V_j, E_j, [u \mapsto \h_u^{(0)}(t_j)]),v).
\end{align*}
where $k$ is the number of layers in $M$ and $[u \mapsto \h_u^{(0)}(t_j)]$ is a labelling function mapping each $u \in V_j$ to the vector $\h_u^{(0)}(t_j)$,
%
\end{definition}
We remark that we introduce the $\idTGNN$ architecture to represent approaches that directly 
leverage existing MPNN architectures for temporal graphs \cite{YouDL22} without employing a specialised 
architecture.

The second type, time-and-graph TGNNs,
are studied in several papers \cite{GR22,chen2023calibrate} and akin to several models \cite{LiHCSWZP19, ChenWX22, SeoDVB18}.
Such TGNNs perform computations by exploiting not only MPNNs, but also \emph{cell functions} (e.g. a gated recurrent unit \cite{ChungGCB14}) mapping vectors to vectors.
In particular, time-and-graph TGNNs use two MPNNs ($M_1$ and $M_2$) and one cell function ($\mathit{Cell}$).
A temporal graph $\TG = (\G_1,t_1), \dots, (\G_n,t_n)$ is, again, processed from left to right in $n$ steps.
In each step $j+1$, the TGNN applies $M_1$ to $G_{j+1}$ and $M_2$ to $G_{j+1}$ with node labels replaced by embeddings computed in step $j$.
This results in computing two vectors for each node, which are combined into a single vector using $Cell$. 
After $n$ steps of such processing, the TGNN terminates its computations. 

\begin{definition}
    \label{sec:tgnn;def:tandgtgnn}
 \emph{Time-and-graph} TGNNs
 $\tandgTGNN[\mathcal{M}, \mathcal{C}]$, for a class $\mathcal{M}$ of MPNNs and a class $\mathcal{C}$ of 
 cell functions, are tuples $\model = (M_1, M_2, \cell, \out)$, where $M_1, M_2 \in \mathcal{M}$ have 
 the same number of layers, $\mathit{Cell} \in \mathcal{C}$, and $\out$ is an output 
 function (as in \Cref{sec:tgnn;def:idealtgnn}).
An application of $\model = (M_1, M_2, \cell, \out)$ to a temporal graph 
$\TG = (\G_1,t_1), \dots, (\G_n,t_n)$ with $\G_j=(V_j,E_j,c_j)$,
yields the following  embeddings for all $v \in V$ and $j+1 \leq n$:
\begin{align*}
\h_v(t_1) & = 
\cell(M_1(G_1,v), M_2((V_1,E_1,[u \mapsto \boldsymbol{0}]), v)), 
 \\ 
\h_v(t_{j+1}) & =             \cell(M_1(G_{j+1},v), M_2((V_{j+1},E_{j+1},[u \mapsto \h_u(t_{j})]), v)), 
    \end{align*}
where $[u \mapsto \h_u(t_{j})]$ is as described in    \Cref{sec:tgnn;def:idealtgnn}.
%
\end{definition}


The third class, global TGNNs  \cite{WR24}, is also realised by several TGNN models \cite{rossi2020temporal, Luo22a}.
Instead of processing a temporal graph in each time point, global TGNNs exploit a temporal message-passing mechanism.
It allows messages to be passed among nodes from different time points.
To capture the time difference between nodes between which messages are passed, global TGNNs use \emph{time functions} $\phi: \R \rightarrow \R^m$.


\begin{definition}
\emph{Global} TGNNs $\globTGNN[\mathcal{M}, \mathcal{Q}, \circ]$, 
for a class $\mathcal{M}$ of MPNNs, a class $\mathcal{Q}$ of time functions, 
and an operation $\circ$ combining two vectors of the same dimensionality,
are tuples 
$\model = (M, \phi, \out)$ where $M \in \mathcal{M}$ 
$\phi \in \mathcal{Q}$, and $\out$ is an output function (as in \Cref{sec:tgnn;def:idealtgnn}).
An application of $\model = (M, \phi, \out)$ to a temporal graph 
$\TG = (\G_1,t_1), \dots, (\G_n,t_n)$ with $\G_j=(V_j,E_j,c_j)$,
yields   embeddings $\h^{(i+1)}_v(t_j)$ for all $v \in V$, $i+1 \leq k$ (for $k$ the number of layers in $M$), and $j \leq n$:
    \begin{align*}
        \h^{(0)}_v(t_j) & = c_j(v), \\
        \h^{(i+1)}_v(t_j) & = \com^{i+1}(\h^{(i)}_v(t_j), \agg^{i+1} \ldblbrace\h^{(i)}_u(t_h) \circ \phi(t_j-t_{h}) \mid \{v,u\} \in E_h, h \leq j \rdblbrace).
    \end{align*}

\end{definition}


In all TGNN models considered above, an output function $\out$ is used
to determine the final classification. 
Formally, for a TGNN  $\model$ and  a pointed temporal graph $(\TG, v)$, we let the output of $\model$ be 
$\model(\TG, v) = \out(\mathbf{h}_v^{(k)}(t_n))$.
We will assume that $\out$ functions are 
realised by a single-layer FNN from the class $\FNN{\trelu}$.




\section{Logical Expressiveness via Product Logics}

We will exploit two-dimensional-product logics as a tool for analysing the expressive power of TGNNs.
Specifically, we consider two-dimensional logics which are products of modal and temporal logics, allowing 
combinations of structural and temporal properties to be expressed.
Importantly, this approach provides an opportunity to control the interaction between logical operators corresponding to the two dimensions.
By modifying allowed interactions in product logics we will obtain logics that are suitable to analyse the expressive power of various classes of TGNNs.

Before introducing product logics, we specify the notion of the expressive power that we will consider in this paper.
It corresponds to the ``logical expressiveness'' also called ``uniform expressiveness'', which is broadly studied in the literature \cite{BarceloKM0RS20,hauke2025aggregate,NunnSST24,BenediktLMT24,grau2025correspondence} and is defined as follows.

\begin{definition}
\label{sec:logic2tgnn;def:leq}
Let $\mathcal{L}$ be a logical language, whose formulas $\varphi$ are evaluated at timestamped nodes $(v,t_n)$ of temporal graphs $\TG=(\G_1,t_1), \dots, (\G_n,t_n)$.
We  write $\TG,(v,t_n) \models \varphi$ if $\varphi$ holds at $(v,t_n)$  in $\TG$.
A TGNN class
$\mathcal{T}$ is \emph{at least as expressive as $\mathcal{L}$}, written $\mathcal{L} \leq \mathcal{T}$, 
if for every formula 
$\varphi$ of $\mathcal{L}$, there exists a model $\model \in \mathcal{T}$ such that, 
for all (coloured) pointed temporal graphs $(\TG,(v,t_n))$:
$$
\TG,(v,t_n) \models \varphi \text{ if and only if } T(\TG,v) = 1.
$$
\end{definition}
Informally, $\mathcal{L} \leq \mathcal{T}$ means that TGNNs from the class $\mathcal{T}$ are powerful enough to express all properties that can be written as (arbitrarily long and complex) formulas of the logics $\mathcal{L}$. 

In order to study the expressive power of TGNNs, we will 
establish their relation to two-dimensional product   logics
\cite{kurucz2003many,DBLP:journals/apin/BennettCWZ02,DBLP:conf/time/LutzWZ08}.
The first dimension will correspond to time and will allow us  to capture evolution of a graph in time.
The second dimension will correspond to the spatial 
structure of graphs and will allow us to express properties of the static graphs at fixed time points.
The product logics we consider here are combinations of the most prominent temporal and modal logics, namely \emph{propositional temporal logic} \PTL{} \cite{Pnueli:1977} and the \emph{basic modal logic} $\K$ \cite{Kripke1972-KRINAN-2}.

\paragraph{Temporal logic \PTLp{}.} We consider the past-time version \PTLp{} of \PTL{} which contains only 
$\past$ (``\emph{sometime in the past}'') and $\prev{}$ (``\emph{yesterday}'') as native temporal operators.
\PTLp{} formulas are built using propositions $c_1, \dots , c_k$ which stand for colours that nodes of 
temporal graphs can take, together with an unrestricted use of Boolean connectives $\neg$ for ``not'', 
$\land$ for ``and'',  $\to$ for ``if ... then ...'', and $\leftrightarrow$ for `` if and only if''.
Formulas of the logic \PTLp{} can state, for example, that a node has currently colour $c_1$ and sometime 
in the past it had colour $c_2$ in two consecutive timepoints. This is written as 
$c_1 \land  \past (c_2 \land \prev{} c_2)$.

\paragraph{Modal logic \K{}.}
For the static part we use the basic modal logic $\K$. Its formulas are similar to those of \PTLp{}, but 
instead of the temporal operators $\past$ and $\prev{}$ it features a single modal operator $\Diamond$.
The intuitive reading of $\Diamond \varphi$ is ``\emph{$\varphi$ holds at one of neighbours (via graph edges) of the current node}''.
Logic $\K$ can express, for example, that a node has an outgoing path of three hops leading to a node of colour $c_1$ or a two-hop path to a node whose colour is $c_2$ and not $c_3$: 
$\Diamond \Diamond \Diamond c_1 \lor \Diamond \Diamond  (c_2 \land \neg c_3)$.

\paragraph{Product logic \PTLK{}.}
The product of logics \PTLp{} and \K{} allows  us to write formulas  with any combination of operators from 
\PTLp{} and \K{}.
For example, $\varphi = c_1 \land (\past c_2) \land \Diamond((\neg c_1 \land c_2) \land Y(c_1 \land \neg c_2))$
is satisfied at a vertex $v$ in time point $t_n$ if $v$ is coloured $c_1$ at $t_n$ and at some
past timestamp $t_i < t_n$, $v$ was coloured with $c_2$. Moreover, there is a neighbour of $v$ 
that is currently coloured with $c_2$ but not $c_1$ at $t_n$, whereas in the preceding timestamp 
$t_{n-1}$ it was coloured with $c_1$ but not with $c_2$. An example of a pointed temporal graph 
$(\TG,v)$ satisfying $\varphi$ in $t_n=t_4$ is given in Figure~\ref{fig:ptlk_examp}.
From the technical perspective, the semantics of \PTLK{} is defined over Cartesian products of 
models of \PTLp{} and \K{}, which justifies the name ``product logic''.
A formal definition of syntax and semantics of this product logic is given in Appendix~\ref{def:ptlk}.

\begin{figure}[h]
    \centering
    \begin{tikzpicture}[
dot/.style = {draw, circle, minimum size=#1,
              inner sep=0pt, outer sep=0pt},
dot/.default = 6pt
]
\scriptsize
\pgfmathsetmacro{\tline}{-0.25}
\pgfmathsetmacro{\w}{1.3}
\pgfmathsetmacro{\h}{1.5}
\pgfmathsetmacro{\inh}{0.7}
\pgfmathsetmacro{\dist}{1.5}
\pgfmathsetmacro{\Ax}{0.5}
\pgfmathsetmacro{\Ay}{1.4}
\pgfmathsetmacro{\Bx}{0.4}
\pgfmathsetmacro{\By}{0.5}
\pgfmathsetmacro{\Cx}{1}
\pgfmathsetmacro{\Cy}{1}

\draw[->] (1.8,\tline) -- (7.4,\tline);
\node at (4.5,\tline-0.3) 
{ };

\foreach \x in {1,...,4}
{
\draw[fill=gray!90!black,opacity=0.2] (\x*\dist,0) -- (\x*\dist+\w,\inh) -- (\x*\dist+\w,\inh+\h) -- (\x*\dist,\h) -- cycle;

\node at (\x*\dist + 0.87 * \w, \inh+ \h - 0.3) {$G_{\x}$};
\node at (\x*\dist + 0.6 * \w, 0) {$t_{\x}=\x$};
\draw[-] (\x*\dist + 0.6 * \w, \tline+0.05) -- (\x*\dist + 0.6 * \w, \tline-0.05);

\node (A\x) at (\x*\dist+\Ax, 0+\Ay) {};
\node (B\x) at (\x*\dist+\Bx, 0+\By) {};
\node (C\x) at (\x*\dist+\Cx, 0+\Cy) {};
}

\draw[thick] (A1) -- (B1);
\draw[thick] (A2) -- (B2);
\draw[thick](B2) -- (C2);
\draw[thick](B3) -- (C3);
\draw[thick] (A4) -- (C4);
\draw[thick] (B4) -- (C4);

\node[dot=9pt,draw=black,fill=mywhite] at (A1) {$u$};
\node[dot=9pt,draw=black,fill=mywhite] at (B1) {$w$};
\node[dot=9pt,draw=black,fill=myblue] at (C1) {$v$};
\node[dot=9pt,draw=black,fill=mywhite] at (A2) {$u$};
\node[dot=9pt,draw=black,fill=mywhite] at (B2) {$w$};
\node[dot=9pt,draw=black,fill=mywhite] at (C2) {$v$};
\node[dot=9pt,draw=black,fill=myred] at (A3) {$u$};
\node[dot=9pt,draw=black,fill=mywhite] at (B3) {$w$};
\node[dot=9pt,draw=black,fill=mywhite] at (C3) {$v$};
\node[dot=9pt,draw=black,fill=myblue] at (A4) {$u$};
\node[dot=9pt,draw=black,fill=mywhite] at (B4) {$w$};
\node[dot=9pt,draw=black,fill=myred] at (C4) {$v$};
\end{tikzpicture}
    \caption{A temporal graph 
    of length $4$;
    colour $c_1$ is denoted by a red filling (node $u$ in $G_3$ and node $v$ in $G_4$) and 
        colour $c_2$ is denoted by a blue filling (node $v$ in $G_1$ and node $u$ in $G_4$)}
    \label{fig:ptlk_examp}
\end{figure}

We also consider variants of \PTLK{} obtained 
by replacing the modal logic $\K$ with other modal 
logics known from the literature, or by syntactically 
restricting how temporal and 
modal operators can co-occur in formulas. 
Such variants differentiate the expressive power of various  TGNN models.

\section{Logical expressiveness of recursive TGNNs}
\label{sec:logic2tgnn}

We provide expressive power results for recursive TGNNs by relating them with product logics.
We show that TGNNs from the class $\idTGNN[\hat{\mathcal{M}}]$ can express all properties definable in 
$\PTLK$.




\begin{restatable}{theorem}{ptlk}
    \label{sec:logic2tgnn;thm:ptlk}
    $\PTLK \leq \idTGNN[\hat{\mathcal{M}}]$.
\end{restatable}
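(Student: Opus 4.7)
The plan is to build, for an arbitrary $\PTLK$ formula $\varphi$, a single TGNN $T = (M, \out) \in \idTGNN[\hat{\mathcal{M}}]$ whose architecture mirrors the syntactic structure of $\varphi$. Enumerate $\sub(\varphi) = \{\psi_1, \dots, \psi_m\}$ in a topological order compatible with subformula containment, so that $\psi_m = \varphi$ and every $\psi_i$ appears after its strict subformulas. The target invariant is that coordinate $i$ of $\h_v^{(k)}(t_j)$ equals $1$ iff $\TG,(v,t_j) \models \psi_i$, for every node $v$ and every timestamp $t_j$ of every temporal graph $\TG$. Once this is established, defining $\out$ as the single-layer $\FNN{\trelu}$ that extracts coordinate $m$ yields the required equivalence at $(\TG, v) = (\TG, (v, t_n))$.

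The recursive structure of $\idTGNN[\hat{\mathcal{M}}]$ is exactly what makes this invariant realisable. Since $\h_v^{(0)}(t_j) = c_j(v) \con \h_v^{(k)}(t_{j-1})$, within one MPNN invocation the model has simultaneous access to the current propositional labels at $t_j$ and to the truth values of every subformula at $t_{j-1}$. I would then implement each connective by a dedicated subcircuit. Propositional letters $c_i$ are read directly off the colour part of the initial embedding. Boolean connectives are realised inside the combine function using the standard encoding of Boolean operations by $\FNN{\trelu}$ on $\{0,1\}$-valued coordinates, for example $x \land y = \trelu(x + y - 1)$ and $\neg x = \trelu(1 - x)$. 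The operator $\prev{}\psi_i$ is obtained by copying the $\psi_i$ coordinate from the previous-step part of the initial embedding. The operator $\past \psi_i$ is handled via the recurrence $\past \psi_i(t_j) = \psi_i(t_j) \lor \past \psi_i(t_{j-1})$, whose second disjunct is again read from the previous embedding. Finally, $\Diamond \psi_i$ is produced by the aggregation layer: with the sum aggregation of $\hat{\mathcal{M}}$, the value $\trelu(\sum_{u} \psi_i(u))$ equals $1$ precisely when at least one neighbour satisfies $\psi_i$.

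To orchestrate all of this correctly, the MPNN $M$ is stratified into $k$ layers where $k$ is the maximal $\Diamond$-nesting depth of $\varphi$. Within each layer every previously computed coordinate is carried forward by the combine function acting as a routing circuit, while the aggregation step produces the coordinates for $\Diamond$-subformulas at the current stratum; Boolean and past-operator subformulas built on top of these are then computed inside the same combine step. The base case $j = 1$, where $\h_v^{(0)}(t_1) = c_1(v) \con \boldsymbol{0}$, matches the standard convention that $\prev{}\psi$ is false at the initial timestamp and that $\past \psi$ at $t_1$ reduces either to $\psi$ at $t_1$ or to false, depending on whether $\past$ is reflexive or strict. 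The invariant is then verified by a double induction, outer on $j$ and inner on the layer index within one timestamp.

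The main obstacle I anticipate is the bookkeeping in the stratified construction: one has to specify how a single combine function simultaneously forwards the bits computed in earlier layers, reads the relevant coordinates of the aggregated neighbour message, and assembles all Boolean and temporal closures of them. Checking that every such combination is indeed realisable by an $\FNN{\trelu}$ of the kind allowed in $\hat{\mathcal{M}}$ reduces to the standard observation that truncated ReLU networks are functionally complete on Boolean values and can implement a ``non-zero sum'' threshold, which are exactly the primitive operations demanded by the semantics of $\PTLK$.
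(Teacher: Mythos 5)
Your construction is essentially the paper's: a syntax-directed MPNN with one coordinate per subformula, trReLU Boolean gates, a thresholded sum aggregation for $\Diamond$, and reads from the previous final embedding for $\prev{}$ and $\past{}$, verified by the same double induction over time and subformula structure; the only organisational differences are that you stratify layers by $\Diamond$-depth rather than allotting one message-passing layer per non-atomic subformula, and you reuse the subformula coordinates of the previous embedding instead of the paper's separate ``previous'' and ``past'' blocks, both of which are fine since combine functions may be multi-layer $\FNN{\trelu}$. The one detail to fix is that the paper's $\past{}$ is strict (a witness at some $t_{j'}<t_j$), so the recurrence should be $\past{}\psi(t_j)=\psi(t_{j-1})\lor\past{}\psi(t_{j-1})$ with both disjuncts read off the previous embedding, rather than the reflexive $\psi(t_j)\lor\past{}\psi(t_{j-1})$ you wrote; you flagged this ambiguity yourself and the correction is immediate.
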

\begin{proof}[Proof sketch.]
For each formula $\varphi \in \PTLK$, we enumerate its subformulas
$\varphi_1, \dots, \varphi_m,$ $\varphi_{m+1}, \dots, \varphi_n = \varphi$
in such a way that the $m$ atomic ones come first, and if $\varphi_i$ is a subformula of $\varphi_j$, written as $\varphi_i \in \mathit{sub}(\varphi_j)$, then $i \leq j$.
We construct a TGNN $\model_\varphi = (M, \out) \in \idTGNN[\hat{\mathcal M}]$ where $M$ consists of $n-m+1$ layers: one
layer for each non-atomic subformula and a final shift layer.
At each time $t$ and for each node $u$ of the current temporal graph, $\model_\varphi$ computes 
the hidden state $\h^{(n-m)}_u(t)$, which encodes
(\textsc{i}) the current truth values of all subformulas $\varphi_i$ in the first $n$ dimensions,
(\textsc{ii}) the truth values of all subformulas $\varphi_i$ at $t-1$ in dimensions $n+1$ to $2n$, and
(\textsc{iii}) the disjunction of truth values of subformulas $\varphi_i$ over all earlier time points
in dimensions $2n+1$ to $3n$.
The final shift layer (layer $n-m+1$) updates (\textsc{iii}) using (\textsc{ii}) without altering (\textsc{i}).
Correctness follows by nested induction over time and subformula structure, showing that 
for each $i \leq n$, dimension $i$ correctly tracks satisfaction of $\varphi_i$,
dimension $n+i$ tracks $\prev{} \varphi_i$, and dimension $2n+i$ tracks $\past{} \varphi_i$.
The output $\mathit{out}$ simply reads the value for $\varphi$ at the final time point.
A full proof is given in Appendix~\ref{proof:ptlk}.
\end{proof}


We provide two further analogous results obtained by replacing $\K$ and $\hat{\mathcal{M}}$ with pairs of 
logics and MPNNs of matching expressiveness known from the literature.
This suggests a general  connection between product logics and recursive TGNNs.
First, $\K$ can be replaced with the logic $\lognunn$, incorporating  linear arithmetics, and
$\hat{\mathcal{M}}$ with the class of MPNNs $\mathcal{M}_{\lognunn}$ of matching expressive power \cite{NunnSST24}.
Second, an analogous result is obtained when using the logic $\logbenedikt$, 
incorporating Presburger Arithmetic, and the class 
$\mathcal{O}\mathcal{L}\trelu\text{-GNN}$ of MPNNs, for which matching expressiveness 
results have been shown \cite{BenediktLMT24}. For the formal details of these logics, see Appendix~\ref{def:benediktnunn}, and we remark that it is evident that both strictly subsume $\PTLK$.

\begin{restatable}{theorem}{benediktnunn}
\label{sec:log2tgnn;thm:benediktnunn}
$\tlognunn \leq \idTGNN[\mathcal{M}_{\lognunn}]$
and
$\tlogbenedikt \leq \idTGNN[\mathcal{O}\mathcal{L}\trelu\text{-GNN}]$.
\end{restatable}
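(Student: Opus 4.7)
The plan is to follow the blueprint of the proof of Theorem~\ref{sec:logic2tgnn;thm:ptlk}, replacing only the step that evaluates modal subformulas. Given $\varphi$ in $\tlognunn$ (resp.\ $\tlogbenedikt$), I would enumerate its subformulas $\varphi_1,\ldots,\varphi_n=\varphi$ in an order compatible with the subformula relation and build a TGNN $\model_\varphi=(M,\out)$ in $\idTGNN[\mathcal{M}_{\lognunn}]$ (resp.\ $\idTGNN[\mathcal{O}\mathcal{L}\trelu\text{-GNN}]$) that maintains exactly the same three-block hidden state as before: the current truth values of all $\varphi_i$, their values at the previous time point, and the disjunction of their values over all earlier time points. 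Boolean subformulas, the temporal operators $\prev{}$ and $\past$, and the inter-timestep shift layer are handled verbatim as in Theorem~\ref{sec:logic2tgnn;thm:ptlk}.

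The only new ingredient concerns modal subformulas, for which I would invoke the matching-expressiveness results of \cite{NunnSST24} (resp.\ \cite{BenediktLMT24}): for each modal subformula $\varphi_j$, treating its immediate proper subformulas as fresh propositional symbols whose node-wise truth values are already present in the hidden state, one obtains an MPNN fragment $M_j$ in the designated class that computes the truth value of $\varphi_j$ at every node. The overall MPNN $M$ is then the sequential composition of all such $M_j$, interleaved with the one-layer constructions for Boolean and temporal subformulas, where every fragment writes to a dedicated block of output coordinates while copying all previously computed coordinates forward unchanged. Since the two product logics extend $\PTLK$ only on the modal side, the temporal-tracking skeleton of Theorem~\ref{sec:logic2tgnn;thm:ptlk} transfers without modification.

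The main technical obstacle I anticipate is verifying that $\mathcal{M}_{\lognunn}$ and $\mathcal{O}\mathcal{L}\trelu\text{-GNN}$ are closed under the required form of sequential-and-parallel composition: one must be able to realise the identity on selected coordinate blocks inside a combination layer, and have aggregations act independently on disjoint coordinate blocks so that stacking the $M_j$ does not disturb previously computed values. Both properties should follow readily from the definitions of these classes, since feedforward networks with $\trelu$ activations can trivially implement coordinate-wise identities and sum-based (or message-sum) aggregations decompose along coordinate blocks; so this step is technical rather than conceptually hard. Once closure is in place, correctness of $\model_\varphi$ follows by the same nested induction over time and subformula depth as in Theorem~\ref{sec:logic2tgnn;thm:ptlk}, with $\out$ simply reading off the coordinate corresponding to $\varphi$ at the final time point.
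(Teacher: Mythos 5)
Your proposal is correct and follows essentially the same route as the paper: reuse the three-block temporal skeleton of Theorem~\ref{sec:logic2tgnn;thm:ptlk} and delegate the modal/counting subformulas to the constructions of \cite{NunnSST24} and \cite{BenediktLMT24}, treating already-evaluated subformulas as fresh atoms. The compositionality issue you flag as the main obstacle is exactly what the paper isolates and discharges, via an explicit ``inductive capture'' lemma asserting that the cited constructions evaluate subformulas layer by layer and preserve their values, together with the observation that both MPNN classes admit arbitrary single-layer $\trelu$ combinations for the coordinate-copying glue.
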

\begin{proof}[Proof sketch]
    We apply similar inductive argument as used in Theorem~\ref{sec:logic2tgnn;thm:ptlk}.
    For each formula $\varphi$, we enumerate its subformulas $\varphi_1, \dotsc, \varphi_{m_1},
    \varphi_{m_1+1}, \dotsc, \varphi_{m_k}, \varphi_{m_k+1}, \dotsc, \varphi_n = \varphi$,
    so that all subformulas of the form $\prev{}\psi$ or $\past{}\psi$ occupy positions $m_i$ for
    $i=1,\ldots,k$.
    Each group of purely non-temporal subformulas (situated between $m_i$ and $m_{i+1}$, assuming that 
    formulas $\varphi_i$
    with $i \leq m_i$ are already addressed) is captured inductively by stacks
    of MPNN layers whose existence is guaranteed by  results from literature \cite{BenediktLMT24,NunnSST24}.
    This necessitates that $\mathcal{O}\mathcal{L}\trelu$-GNN and $\mathcal{M}_{\lognunn}$ are closed under
    arbitrary but well-defined combinations of MPNN layers.
    Temporal subformulas $\prev{}\psi$ and $\past{}\psi$ are handled by simple dimension shifts,
    reading values stored at previous timepoints or across past histories, similar to Theorem~\ref{sec:logic2tgnn;thm:ptlk}.
    Here, we use the same form of hidden states, divided into three blocks tracking
    the current, previous, and past satisfaction of subformulas.
    Correctness follows by nested induction over time and subformula structure:
    for non-temporal subformulas, correctness can be shown exploiting results from literature \cite{BenediktLMT24,NunnSST24},
    while for temporal subformulas, it follows from the way in which the temporal operators
    $\prev{}$ and $\past{}$ are addressed.
    The full proof is  in Appendix~\ref{proof:benediktnunn}.
\end{proof}



\section{Logical expressiveness of time-and-graph and global TGNNs}
\label{sec:logic2tgnnvariants}

Next, we focus on two variants of 
TGNNs that have been considered in the literature thus far, aiming to unveil certain differences 
in their logical expressiveness. 

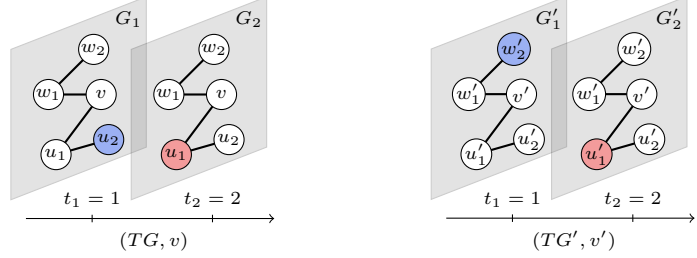
\begin{figure}
    \centering
    \begin{minipage}{0.4\textwidth}
    \centering
            \begin{tikzpicture}[
dot/.style = {draw, circle, minimum size=0.4cm,
              inner sep=0pt, outer sep=0pt},
dot/.default = 6pt
]
\scriptsize
\pgfmathsetmacro{\tline}{-0.25}
\pgfmathsetmacro{\w}{1.8}
\pgfmathsetmacro{\h}{2.0}
\pgfmathsetmacro{\inh}{0.7}
\pgfmathsetmacro{\dist}{1.6}
\pgfmathsetmacro{\Ax}{0.5}
\pgfmathsetmacro{\Ay}{1.4}
\pgfmathsetmacro{\Bx}{0.6}
\pgfmathsetmacro{\By}{0.6}
\pgfmathsetmacro{\Cx}{1.2}
\pgfmathsetmacro{\Cy}{1.4}
\pgfmathsetmacro{\Dx}{1.1}
\pgfmathsetmacro{\Dy}{2}
\pgfmathsetmacro{\Ex}{1.3}
\pgfmathsetmacro{\Ey}{0.8}


\draw[->] (1.8,\tline) -- (5.1,\tline);
\node at (3.5,\tline-0.3) {$(\TG,v)$};

\foreach \x in {1,...,2}
{
\draw[fill=gray!90!black,opacity=0.2] (\x*\dist,0) -- (\x*\dist+\w,\inh) -- (\x*\dist+\w,\inh+\h) -- (\x*\dist,\h) -- cycle;

\node at (\x*\dist + 0.87 * \w, \inh+ \h - 0.3) {$G_{\x}$};
\node at (\x*\dist + 0.6 * \w, 0) {$t_{\x}=\x$};
\draw[-] (\x*\dist + 0.6 * \w, \tline+0.05) -- (\x*\dist + 0.6 * \w, \tline-0.05);

\node (A\x) at (\x*\dist+\Ax, 0+\Ay) {};
\node (B\x) at (\x*\dist+\Bx, 0+\By) {};
\node (C\x) at (\x*\dist+\Cx, 0+\Cy) {};
\node (D\x) at (\x*\dist+\Dx, 0+\Dy) {};
\node (E\x) at (\x*\dist+\Ex, 0+\Ey) {};
}

\draw[thick] (A1) -- (D1);
\draw[thick] (A1) -- (C1);
\draw[thick] (B1) -- (C1);
\draw[thick] (B1) -- (E1);
\draw[thick] (A2) -- (D2);
\draw[thick] (A2) -- (C2);
\draw[thick] (B2) -- (C2);
\draw[thick] (B2) -- (E2);

\node[dot=9pt,draw=black, fill=mywhite] at (A1) {$w_1$};
\node[dot=9pt,draw=black, fill=mywhite] at (B1) {$u_1$};
\node[dot=9pt,draw=black, fill=mywhite] at (C1) {$v$};
\node[dot=9pt,draw=black, fill=mywhite] at (D1) {$w_2$};
\node[dot=9pt,draw=black, fill=myblue] at (E1) {$u_2$};
\node[dot=9pt,draw=black, fill=mywhite] at (A2) {$w_1$};
\node[dot=9pt,draw=black, fill=myred] at (B2) {$u_1$};
\node[dot=9pt,draw=black, fill=mywhite] at (C2) {$v$};
\node[dot=9pt,draw=black, fill=mywhite] at (D2) {$w_2$};
\node[dot=9pt,draw=black, fill=mywhite] at (E2) {$u_2$};
\end{tikzpicture}
    \end{minipage}%
    \begin{minipage}{0.4\textwidth}
    \centering
            \begin{tikzpicture}[
dot/.style = {draw, circle, minimum size=0.4cm,
              inner sep=0pt, outer sep=0pt},
dot/.default = 6pt
]
\scriptsize
\pgfmathsetmacro{\tline}{-0.25}
\pgfmathsetmacro{\w}{1.8}
\pgfmathsetmacro{\h}{2.0}
\pgfmathsetmacro{\inh}{0.7}
\pgfmathsetmacro{\dist}{1.6}
\pgfmathsetmacro{\Ax}{0.5}
\pgfmathsetmacro{\Ay}{1.4}
\pgfmathsetmacro{\Bx}{0.6}
\pgfmathsetmacro{\By}{0.6}
\pgfmathsetmacro{\Cx}{1.2}
\pgfmathsetmacro{\Cy}{1.4}
\pgfmathsetmacro{\Dx}{1.1}
\pgfmathsetmacro{\Dy}{2}
\pgfmathsetmacro{\Ex}{1.3}
\pgfmathsetmacro{\Ey}{0.8}


\draw[->] (1.8,\tline) -- (5.1,\tline);
\node at (3.5,\tline-0.3) {$(\TG',v')$};

\foreach \x in {1,...,2}
{
\draw[fill=gray!90!black,opacity=0.2] (\x*\dist,0) -- (\x*\dist+\w,\inh) -- (\x*\dist+\w,\inh+\h) -- (\x*\dist,\h) -- cycle;

\node at (\x*\dist + 0.87 * \w, \inh+ \h - 0.3) {$G_{\x}'$};
\node at (\x*\dist + 0.6 * \w, 0) {$t_{\x}=\x$};
\draw[-] (\x*\dist + 0.6 * \w, \tline+0.05) -- (\x*\dist + 0.6 * \w, \tline-0.05);

\node (A\x) at (\x*\dist+\Ax, 0+\Ay) {};
\node (B\x) at (\x*\dist+\Bx, 0+\By) {};
\node (C\x) at (\x*\dist+\Cx, 0+\Cy) {};
\node (D\x) at (\x*\dist+\Dx, 0+\Dy) {};
\node (E\x) at (\x*\dist+\Ex, 0+\Ey) {};
}

\draw[thick] (A1) -- (D1);
\draw[thick] (A1) -- (C1);
\draw[thick] (B1) -- (C1);
\draw[thick] (B1) -- (E1);
\draw[thick] (A2) -- (D2);
\draw[thick] (A2) -- (C2);
\draw[thick] (B2) -- (C2);
\draw[thick] (B2) -- (E2);

\node[dot=9pt,draw=black, fill=mywhite] at (A1) {$w_1'$};
\node[dot=9pt,draw=black, fill=mywhite] at (B1) {$u_1'$};
\node[dot=9pt,draw=black, fill=mywhite] at (C1) {$v'$};
\node[dot=9pt,draw=black, fill=myblue] at (D1) {$w_2'$};
\node[dot=9pt,draw=black, fill=mywhite] at (E1) {$u_2'$};
\node[dot=9pt,draw=black, fill=mywhite] at (A2) {$w_1'$};
\node[dot=9pt,draw=black, fill=myred] at (B2) {$u_1'$};
\node[dot=9pt,draw=black, fill=mywhite] at (C2) {$v'$};
\node[dot=9pt,draw=black, fill=mywhite] at (D2) {$w_2'$};
\node[dot=9pt,draw=black, fill=mywhite] at (E2) {$u_2'$};
\end{tikzpicture}
    \end{minipage}
        \caption{Counterexample, used in Theorem~\ref{th:TandGweak}; colour $c_1$ is denoted by a red filling (node $u_1$ in $G_2$ and node $u_1'$ in $G_2'$) and 
        colour $c_2$ is denoted by a blue filling (node $u_2$ in $G_1$ and node $w_2'$ in $G_1'$)}
    \label{fig:tandgweak_ex}
\end{figure}
First, we examine
time-and-graph TGNNs \cite{GR22}, denoted by $\tandgTGNN[\mathcal{M}, \mathcal{C}]$ and parametrised by a class of MPNNs $\mathcal{M}$ and 
a class of cell functions $\mathit{C}$. These TGNNs differ from the previously discussed classes $\idTGNN[\mathcal{M}]$, 
not in the specific form of MPNNs used, but in how temporal information is processed. 
In particular, $\mathcal{T}_{\tandg}$ appears less powerful at processing the combined information 
of a current snapshot $(G_i, t_i)$ 
and the embedded information of the past snapshots $(G_1, t_1), \dotsc, (G_{i-1}, t_{i-1})$.
\begin{restatable}{theorem}{tandgweak}
    \label{th:TandGweak}
    $\PTLK \not\leq \tandgTGNN[\mathcal{M},\mathcal{C}]$, for all  $\mathcal{M}$ and $\mathcal{C}$.    
\end{restatable}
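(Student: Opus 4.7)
The plan is to exhibit the pair of pointed temporal graphs $(\TG,v)$ and $(\TG',v')$ from Figure~\ref{fig:tandgweak_ex} together with a separating $\PTLK$ formula, and then to show that no time-and-graph TGNN can distinguish them. A natural candidate is $\varphi = \Diamond\bigl(c_1 \land \prev{}\,\Diamond c_2\bigr)$. I would first verify that $\varphi$ holds at $(\TG,v,t_2)$: the neighbour $u_1$ of $v$ is coloured $c_1$ at $t_2$ and has the $c_2$-neighbour $u_2$ at $t_1$. On the primed side, the only $c_1$-neighbour of $v'$ at $t_2$ is $u_1'$, whose $t_1$-neighbours $v'$ and $u_2'$ are both uncoloured (the $c_2$-node $w_2'$ sits below the other branch, attached to $w_1'$), so $\varphi$ fails at $(\TG',v',t_2)$.

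The key technical step would be to produce an ``arm-swap'' isomorphism $\tau$ between the $t_1$-snapshots, defined by $\tau(v)=v'$, $\tau(w_1)=u_1'$, $\tau(u_1)=w_1'$, $\tau(w_2)=u_2'$, $\tau(u_2)=w_2'$. A routine check of edges confirms that $\tau$ is a graph isomorphism, and because the unique $c_2$-node $u_2$ of $G_1$ is mapped to the unique $c_2$-node $w_2'$ of $G_1'$, $\tau$ also preserves colours at $t_1$. By isomorphism invariance of MPNNs, both $M_1$ applied to $G_1$ and $M_2$ applied to $G_1$ with all-zero labels return equal embeddings at each node $u$ and its image $\tau(u)$; hence $\h_u(t_1) = \h_{\tau(u)}(t_1)$ for every node $u$, and in particular $\h_v(t_1) = \h_{v'}(t_1)$.

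I would then extend this invariance to $t_2$ using two different but simultaneously available isomorphisms on the two channels of the time-and-graph update. For $M_1(G_2,v)$, the identity-like isomorphism $(G_2,v) \to (G_2',v')$ is colour-preserving (red sits on $u_1$ and on $u_1'$), so $M_1(G_2,v) = M_1(G_2',v')$. For $M_2\bigl((V_2,E_2,[u \mapsto \h_u(t_1)]),v\bigr)$, I would reuse $\tau$: the edge structure is unchanged from $t_1$, the labels $\h_\cdot(t_1)$ are preserved by the previous step, and the current colouring is irrelevant because $M_2$ reads only the previous embeddings. Combining via $\cell$ yields $\h_v(t_2) = \h_{v'}(t_2)$, hence $\out(\h_v(t_2)) = \out(\h_{v'}(t_2))$, so every $\mathcal{T} \in \tandgTGNN[\mathcal M,\mathcal C]$ returns the same verdict on both instances, contradicting the ability of the class to express $\varphi$.

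The main obstacle is choosing an example that makes $\varphi$ discriminating while simultaneously admitting \emph{two} invariances---a colour-preserving identity for $M_1$ at $t_2$ and the ``arm-swap'' $\tau$ for $M_2$ at $t_1$ and $t_2$. The formula $\varphi$ asks to identify a single neighbour that is red \emph{now} and had a blue neighbour \emph{then}; time-and-graph architectures route current colours and previous embeddings on disjoint channels that are fused only pointwise via $\cell$, without further message passing, so they cannot relink these two pieces of information through the graph structure. Designing the instance so that this channel separation is aligned precisely with a non-trivial automorphism of $v$'s neighbourhood is where the argument concentrates its creative content.
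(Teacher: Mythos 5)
Your proposal is correct and follows essentially the same route as the paper's proof: the same counterexample pair from Figure~\ref{fig:tandgweak_ex}, the same ``arm-swap'' correspondence $v\mapsto v'$, $u_i\mapsto w_i'$, $w_i\mapsto u_i'$ for the $M_2$/past channel, the identity correspondence for the $M_1$/present channel at $t_2$, and the observation that $\cell$ then receives identical inputs. The only (immaterial) difference is your separating formula $\Diamond(c_1\land\prev{}\Diamond c_2)$ versus the paper's $\Diamond(c_1\land\Diamond\prev{}c_2)$, which coincide on these instances because the edge relation does not change between the two snapshots.
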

\begin{proof}[Proof sketch.]
    Consider the $\PTLK$ formula $\varphi = \Diamond(c_1 \land \prev{}\Diamond c_2)$, satisfied by
    all pointed temporal graphs $(\TG, (v,t_n))$ where $n \geq 2$ and node $v$ has a
    neighbour that is of colour $c_1$ at timestamp $t_n$ and which has a neighbour of colour $c_2$ at timestamp $t_{n-1}$. For example, see Figure~\ref{fig:tandgweak_ex}, where $(\TG,v)$ satisfies $\varphi$ and $(\TG',v')$ does not. 
    We can show that there exists no TGNN $\model = (M_1, M_2, \cell) \in \tandgTGNN[\mathcal{M},\mathcal{C}]$ for any class of  MPNNs $\mathcal{M}$ and cell functions
    $\mathcal{C}$ that captures $\varphi$. 
    Intuitively,  $M_1$ handles the present label information
    and $M_2$ the past information, but neither handles both. 
    While  $\cell$ can use outputs of $M_1$ and $M_2$, it has no access to the topology of the static graph. 
    However, to check whether $\varphi$ is satisfied, this is necessary. 
    This results in the fact that $\model$ either accepts both $(\TG,v)$ and $(\TG',v')$ of Figure~\ref{fig:tandgweak_ex}, or none.
    A formal proof of this is provided in Appendix~\ref{proof:tandgweak}.
\end{proof}

The natural next question is to identify a fragment of $\PTLK$ whose  formulas can be expressed by time-and-graph TGNNs.
Let $\mathcal{F} \subset \FNN{\trelu}$ be the class of single layer FNNs with truncated-ReLU activations.
We let
$\mathcal{L}_1$
be the fragment of $\PTLK$ containing only formulas $\varphi$ such that for all 
$\Diamond \psi \in \sub(\varphi)$ there is either no $Q\chi \in \sub(\psi)$ with $Q \in \{\prev{}, \past{}\}$ 
or for all $c \in \sub(\psi)$ there is $Q\chi \in \sub(\psi)$ with $Q \in \{\prev{}, \past{}\}$ such that 
$c \in \sub(\chi)$.
Hence, $\mathcal{L}_1$ restricts the allowed interaction of operators in $\PTLK$.
For example $\Diamond \past c_1$ is a formula of $\mathcal{L}_1$, but $\Diamond(\past{}c_1 \land c_2)$ is not.

\begin{restatable}{theorem}{tandg}
    \label{sec:variants_logic2tnn;thm:tandg} 
    $\mathcal{L}_1 \leq \tandgTGNN[\hat{\mathcal{M}},\mathcal{F}]$.
\end{restatable}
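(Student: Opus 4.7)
The plan is to adapt the construction from Theorem~\ref{sec:logic2tgnn;thm:ptlk} to the time-and-graph architecture by distributing the evaluation of subformulas between the two MPNNs $M_1$ and $M_2$ according to whether each subformula uses current colours or past embeddings. First, I enumerate the subformulas $\varphi_1,\dotsc,\varphi_n = \varphi$ so that atomic ones appear first and each subformula precedes its superformulas, and classify every $\varphi_i$ as \emph{purely current} (no $\prev{}$ or $\past{}$ occurs in it), \emph{purely past} (every atomic proposition in it lies within the scope of some $\prev{}$ or $\past{}$), or \emph{mixed} otherwise. The defining property of $\mathcal{L}_1$ is that whenever $\Diamond\psi \in \sub(\varphi)$, the body $\psi$ is purely current or purely past; hence no $\Diamond$ ever has to combine information that only $M_1$ has (current colours) with information that only $M_2$ has (past embeddings).

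For every node $v$ and every time $t_j$ I would maintain a $3n$-dimensional hidden state $\h_v(t_j)$ whose three blocks store, respectively, the truth values of $\varphi_1,\dotsc,\varphi_n$ at $v$ at $t_j$, at $t_{j-1}$, and disjoined over all earlier timestamps, exactly as in the proof of Theorem~\ref{sec:logic2tgnn;thm:ptlk}. The MPNN $M_1 \in \hat{\mathcal{M}}$ is then built by reusing the construction of that theorem restricted to the non-temporal fragment, using one layer per non-atomic purely-current subformula, and applied to $(V_j,E_j,c_j)$ so its output carries, at every node, the truth values of all purely-current subformulas at $t_j$. Symmetrically, $M_2 \in \hat{\mathcal{M}}$ acts on $(V_j,E_j,[u\mapsto \h_u(t_{j-1})])$ and computes, at every node, the truth values of all purely-past subformulas at $t_j$: for $\prev{}\varphi_i$ and $\past{}\varphi_i$ it reads the appropriate slots of $\h_u(t_{j-1})$, while for $\Diamond\psi$ with $\psi$ purely past it performs message passing on the current edges $E_j$ with the past atomic values supplied by $\h_u(t_{j-1})$. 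In addition, $M_2$ forwards the shifts that produce the prev- and past-blocks of $\h_v(t_j)$ from the corresponding blocks of $\h_v(t_{j-1})$.

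The cell $\cell \in \mathcal{F}$ then assembles $\h_v(t_j)$ from the outputs of $M_1$ and $M_2$: it copies purely-current slots from $M_1$, copies purely-past slots and the block-shifts from $M_2$, and realises each mixed subformula as a truncated-ReLU threshold combination of the already-computed values of its immediate subformulas. The output function $\out$ then reads the $\varphi$-slot of $\h_v(t_n)$, and correctness follows by a nested induction on time and subformula complexity, mirroring the argument of Theorem~\ref{sec:logic2tgnn;thm:ptlk}.

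The main obstacle I expect is ensuring that the single-layer $\cell$ can actually resolve every mixed subformula. The crucial point is that, by the $\mathcal{L}_1$ restriction, no mixed subformula appears inside any $\Diamond$, so all graph-structured reasoning is handled by $M_1$ or $M_2$ and $\cell$ is only ever asked for local Boolean combinations of precomputed bits. These combinations are then shaped into single-layer trReLU thresholds by rewriting mixed subformulas into a normal form over $\land,\lor,\lnot$ and, if necessary, introducing auxiliary subformulas into the enumeration so that each mixed subformula is only one Boolean level above its immediate subformulas, which the truncated-ReLU activation realises as a linear threshold.
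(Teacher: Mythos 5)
Your proposal follows essentially the same route as the paper's proof: enumerate subformulas, let $M_1$ evaluate the purely-current ones on the snapshot, let $M_2$ evaluate the purely-past ones from the stored prev/past blocks of the previous hidden state, and leave the remaining (mixed, necessarily Boolean by the definition of $\mathcal{L}_1$) subformulas to the single-layer cell, with correctness by the same nested induction on time and subformula structure. The one obstacle you flag---whether a single trReLU layer can resolve nested mixed Boolean subformulas---is exactly the point the paper also treats only briefly (by appeal to standard single-layer FNN constructions for Boolean conditions), so your account matches the paper's argument in both its structure and its level of detail there.
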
 
\begin{proof}[Proof sketch]
    We apply the inductive approach of Theorems \ref{sec:logic2tgnn;thm:ptlk} and \ref{sec:log2tgnn;thm:benediktnunn}
    to formulas $\varphi \in \mathcal{L}_1$. We construct time-and-graph TGNNs
    $\model_\varphi = (M_1, M_2, \cell, \out) \in \tandgTGNN[\hat{\mathcal{M}},\mathcal{F}]$ where the three components
    partition the evaluations according to the syntactic form of the subformulas $\varphi_i$ of $\varphi$.
    The MPNN $M_1$ captures all subformulas $\varphi_i$ that do not include the temporal operators $\prev{}$ or $\past{}$, meaning
    it processes static properties of the current snapshot.
    Similarly, the MPNN $M_2$ handles all subformulas where every atomic subformula is nested under a temporal operator,
    indicating that the formula evaluation relies solely on the prior hidden states.
    The cell function $\cell$ manages the remaining subformulas, which must be Boolean subformulas
    ($\neg \psi_1$ or $\psi_1 \land \psi_2$) due to the definition of $\mathcal{L}_1$.
    Correctness is established through a nested induction over time and subformula structure, as done 
    previously. 
    The structure of $\mathcal{L}_1$ ensures that all subformulas are appropriately covered by the
    division among $M_1$, $M_2$, and $\cell$. A full proof is provided in Appendix~\ref{proof:tandg}.
\end{proof}
We remark that some time-and-graph TGNNs considered in the literature \cite{GR22} utilise a gated recurrent unit (GRU) \cite{ChoMGBBSB14, ChungGCB14}
as their cell function. 
The class of TGNNs considered in \Cref{sec:variants_logic2tnn;thm:tandg} is  a subset of this broader class.
There are also  similar, \emph{time-then-graph} TGNNs, considered in the literature \cite{GR22}. 
They first process the temporal information of a temporal graph
using a recurrent neural network (RNN), followed by the application of an MPNN to
capture the topology. 
It is shown that for each
time-and-graph TGNN, there exists an equivalent time-then-graph TGNN
\cite[Theorems~3.5 and 3.6]{GR22}. Thus, our results from Theorem~\ref{sec:variants_logic2tnn;thm:tandg} 
transfer to time-then-graph TGNNs.

Next we consider global TGNNs, $\globTGNN[\mathcal{M}, \mathcal{Q}, \circ]$, recently studied in the literature \cite{WR24}.
They are parametrised by a class of MPNNs $\mathcal{M}$, time functions from $\mathcal{Q}$, and an operation $\circ$ combining label and temporal information in the aggregation. 
Similarly to time-and-graph TGNNs, $\globTGNN$ cannot capture all properties definable in $\PTLK$.
\begin{restatable}{theorem}{globTGNNweak}
    \label{thm:globTGNNweak}
    $\PTLK \not\leq \globTGNN[\mathcal{M}, \mathcal{Q}, \circ]$, for all  $\mathcal{M}$, $\mathcal{Q}$, and  $\circ$.    
\end{restatable}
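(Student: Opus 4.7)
The plan is to exhibit a single simple $\PTLK$-formula together with two pointed temporal graphs that every global TGNN must treat identically but which the formula separates. My distinguishing formula will be the purely temporal $\varphi = \prev{} c$ for a single colour proposition $c$, which is a legitimate $\PTLK$-formula because $\PTLp \subseteq \PTLK$. The structural reason this ought to work lies in the definition of global TGNNs: at every layer the update at a node $v$ combines $\h^{(i)}_v(t_j)$ with an aggregation over \emph{edge-neighbours} of $v$ drawn from past snapshots, and it never takes $\h^{(\cdot)}_v(t_h)$ for $h < j$ as a separate input. Consequently, a node that is isolated in every snapshot receives the empty aggregate at every layer, and its final embedding is a deterministic function of its present label alone.

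I will build the separating pair on the single-node vertex set $V = \{v\}$ with $E_1 = E_2 = \emptyset$. Fixing two distinct colours $c, c'$, set $c_1(v) = c$ and $c_2(v) = c'$ in $\TG$, and set $c_1'(v) = c_2'(v) = c'$ in $\TG'$. Then $\TG,(v,t_2) \models \prev{} c$ while $\TG',(v,t_2) \not\models \prev{} c$, yet the two pointed temporal graphs agree on the label of $v$ at the present time $t_2$.

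Indistinguishability is verified by a short induction on the layer index $i$. At layer $0$ the embedding equals $c_j(v)$, which coincides in $\TG$ and $\TG'$ at $j = 2$. For the inductive step, the aggregation multiset
\[
\ldblbrace \h^{(i)}_u(t_h) \circ \phi(t_j-t_h) \mid \{v,u\} \in E_h,\ h \leq j \rdblbrace
\]
is empty in both temporal graphs (there are no edges incident to $v$ at any time), so $\h^{(i+1)}_v(t_j) = \com^{i+1}\!\bigl(\h^{(i)}_v(t_j),\, \agg^{i+1}(\emptyset)\bigr)$ in both graphs, yielding identical values by the inductive hypothesis. Therefore $\h^{(k)}_v(t_2)$ is the same in $\TG$ and $\TG'$, and hence $T(\TG,v) = T(\TG',v)$ for every $T \in \globTGNN[\mathcal{M}, \mathcal{Q}, \circ]$, contradicting any purported equivalence with $\varphi$.

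The only real subtlety lies in the universal quantification over $\mathcal{M}$, $\mathcal{Q}$ and $\circ$, but this is in fact where the counterexample is at its cleanest: the aggregation multiset is already empty before $\phi$ or $\circ$ can act, and the combine function sees only the current-time embedding together with the constant $\agg^{i+1}(\emptyset)$, so no choice of MPNN class, time function class, or combination operator has any handle on which of $\TG, \TG'$ it is processing. In contrast with the analogous counterexample for time-and-graph TGNNs in Theorem~\ref{th:TandGweak}, the argument here is entirely insensitive to the model's hyperparameters, which I view as the qualitative crux of the proof.
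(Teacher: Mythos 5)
Your proposal is correct and follows essentially the same route as the paper's own proof: the formula $\prev{} c_1$ together with a single isolated node whose present label at $t_2$ coincides across the two temporal graphs, so that the aggregation multiset is empty and the final embedding depends only on the current label. The only (inessential) difference is the particular colouring of the two snapshots; your explicit layer induction makes the indistinguishability argument slightly more detailed than the paper's, but the idea is identical.
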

\begin{proof}[Proof sketch.]
    Consider the $\PTLK$ formula $\varphi = \prev{} c_1$, which is satisfied by all
    pointed temporal graphs $(\TG, (v,t_n))$ where $n \geq 2$ and the node $v$
    was of colour $c_1$ at timestamp $t_{n-1}$. 
    Interestingly, there is no
    $\globTGNN[\mathcal{M}, \mathcal{Q}, \circ]$ for any  $\mathcal{M}$,
    $\mathcal{Q}$, and $\circ$ that can express this property. The reason is as follows:
    the notion of temporal neighbourhood utilised by global TGNNs does not give a 
    node $v$ access to its own past information. It can only access past informations 
    of neighbours.
    Since this is the only form of 
    temporal information used by such TGNNs, it excludes the ability to recognise whether $v$ itself was coloured $c_1$
    in the past.
    For the full proof, see Appendix~\ref{proof:globtgnnweak}.
\end{proof}

Next, we turn to identifying a fragment of $\PTLK$ captured by $\globTGNN$. The architecture of global TGNNs appears to necessitate a class of MPNN components
that effectively utilise temporal information. Specifically, we use MPNNs incorporating a learnable message
function $\msg$ in their aggregation. This enables global TGNNs to process messages
based on temporal information prior to aggregation. Thus, we use $\hat{\mathcal{M}}_\msg$ (see Section~\ref{sec:tgnn} to recall the definition of such GNNs).
Concerning $\phi$ and $\circ$, meaning the operations by which temporal information is processed 
and then combined with static neighbourhood 
information during aggregation, we employ time2vec \cite{Kazemi2019} and concatenation 
$\con$ of vectors, akin to models such as TGAT or Temporal Graph Sum \cite{rossi2020temporal}. 
Let $\mathcal{Q}_\mathsf{time2vec}$ be the class of all time2vec functions (see Appendix~\ref{def:time2vec} for a formal definition). 
We let
$\mathcal{L}_2$ be the fragment of $\PTLK$, whose formulas $\varphi$ are such that 
 for all $Q\psi \in \sub(\varphi)$ with $Q \in \{\prev{},\past{}\}$ we have $\psi = \Diamond\chi$ for some formula $\chi$.
For example $ \past{} \Diamond c_1$ is  a formula of $\mathcal{L}_2$, but $ \past c_1$ is not.

\begin{restatable}{theorem}{globaltgnn}
    \label{sec:variants_logic2tnn;thm:globaltgnn}
    $\mathcal{L}_2 \leq \globTGNN[\hat{\mathcal{M}}_\msg, \mathcal{Q}_\mathsf{time2vec}, \con]$.
\end{restatable}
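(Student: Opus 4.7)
The plan is to adapt the inductive construction employed in Theorem~\ref{sec:logic2tgnn;thm:ptlk} and Theorem~\ref{sec:variants_logic2tnn;thm:tandg} to the global-TGNN setting, exploiting the temporal aggregation intrinsic to $\globTGNN[\hat{\mathcal{M}}_\msg, \mathcal{Q}_\mathsf{time2vec}, \con]$. Given $\varphi \in \mathcal{L}_2$, first enumerate its subformulas $\varphi_1, \ldots, \varphi_n = \varphi$ in a topological order and build $\model_\varphi = (M, \phi, \out)$ with $M \in \hat{\mathcal{M}}_\msg$ whose layers are in bijection with the non-atomic subformulas. The invariant to preserve is that, after the layer dedicated to $\varphi_i$, the $i$th dimension of the hidden state $\h^{(\cdot)}_v(t_j)$ records the truth value of $\varphi_i$ at $(v, t_j)$ for every node $v$ and timestamp $t_j$.

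Atomic subformulas $c_k$ are initialised from node labels; Boolean combinations are realised inside $\com$ as $\FNN{\trelu}$ networks on the current hidden state without touching the aggregation. The decisive case is subformulas of the form $\Diamond \psi$, where the definition of $\mathcal{L}_2$ guarantees that $\psi$ is either non-temporal or one of $\prev{}\chi$, $\past{}\chi$ with $\chi$ already handled. In particular, no temporal operator ever has to be evaluated at a node's own state, thereby sidestepping exactly the obstruction exposed by Theorem~\ref{thm:globTGNNweak}. Such $\Diamond$-patterns are realised through the temporal aggregation: the time2vec embedding $\phi(t_j-t_h) \in \mathcal{Q}_\mathsf{time2vec}$ distinguishes in particular the offsets $0$, $1$, and arbitrary strictly positive offsets, and since $\msg \in \FNN{\trelu}$ implements piecewise-linear gates, it can be designed to route (i) messages with offset $0$ into the dimension allocated to $\Diamond \psi$ for non-temporal $\psi$, (ii) messages with offset $1$ carrying $\chi$'s value into the dimension for $\Diamond \prev{}\chi$, and (iii) all past messages carrying $\chi$'s value into the dimension for $\Diamond \past{}\chi$. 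Sum aggregation followed by a $\trelu$ threshold then realises the existential quantification of $\Diamond$, and the final $\out$ simply reads off the dimension corresponding to $\varphi$ at the last timestamp.

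The main obstacle will be reconciling the product-logic semantics of $\Diamond$, which quantifies over neighbours in the current snapshot $G_j$, with the global-TGNN aggregation, which ranges over past edges $\{v,u\} \in E_h$ with $h \leq j$. The syntactic restriction defining $\mathcal{L}_2$ is precisely what makes this reconciliation possible: because every $\prev{}$ and $\past{}$ appears immediately beneath a $\Diamond$, the truth values required to evaluate such subformulas can always be harvested from messages originating in the appropriate past snapshot and filtered through $\phi$, rather than requiring any independent propagation of a node's own past state. Correctness then follows by a nested induction over time $t_j$ (outer) and subformula structure (inner), mirroring the argument of Theorem~\ref{sec:logic2tgnn;thm:ptlk}, with the additional bookkeeping that distinct target dimensions are never clobbered by the shared sum aggregation — a standard consequence of the direct-sum design of the $\msg$ gates and the piecewise-linear capabilities of $\FNN{\trelu}$.
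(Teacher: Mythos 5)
Your proposal takes essentially the same route as the paper's proof: one layer per subformula, Boolean cases in $\com$, temporal subformulas deferred to their enclosing $\Diamond$ as guaranteed by $\mathcal{L}_2$, and the key mechanism of gating messages through $\msg$ according to the time2vec offset ($0$ for plain $\Diamond\psi$, one step for $\Diamond\prev{}\chi$, all past steps for $\Diamond\past{}\chi$), followed by sum aggregation and a $\trelu$ threshold and the same nested induction. The only differences are cosmetic (your sign convention on the offsets versus the paper's), so the proposal is correct relative to the paper's own argument.
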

\begin{proof}[Proof sketch]
    We use the inductive approach from Theorem~\ref{sec:logic2tgnn;thm:ptlk}, but now use a global TGNN
    $\model_\varphi = (M, \phi, \out)$. 
    For each subformula $\varphi_i$, a layer $l^{(i)}$ in $M$ computes its semantics.
    Boolean subformulas are handled as in previous results and temporal subformulas $\prev{}\psi$ and $\past{}\psi$ are deferred,
    since their semantics are captured indirectly through $\Diamond$ subformulas, as ensured by the definition of $\mathcal{L}_2$.
    The key mechanism to do this is the interaction between $\phi$ and $\msg$: for each $Q\Diamond \psi$ with $Q \in \{\past{},\prev{}\}$, $\msg$ selectively passes
    information based on $\phi(t)$, enforcing that only the correct previous or past timepoints contribute.
    If $\psi$ is of the form $\prev{}\Diamond \chi$ or $\past{} \Diamond \chi$, $\msg$ filters messages from exactly the previous timepoint
    or from all earlier timepoints, respectively. Otherwise, $\msg$ restricts aggregation to the current timepoint.
    Correctness follows by induction over time and subformula structure. A full proof is provided in Appendix~\ref{proof:globtgnn}.
\end{proof}
While employing time2vec functions is motivated by common practice, it is not essential for achieving the previous result.
In fact, it suffices to have a function $\phi$ capable of mapping the values $0$, $-1$, and $t$, where $t \leq -2$,
to distinct values.
Similarly, $\con$ is not strictly necessary. The same result can be achieved, with a slightly adapted construction,
using entrywise multiplication or addition as $\circ$.

Now, combining the previous results, namely Theorems~\ref{sec:logic2tgnn;thm:ptlk} to \ref{sec:variants_logic2tnn;thm:globaltgnn}, 
and the insight that $\Diamond(c_1 \land \prev{}\Diamond c_2) \in \mathcal{L}_2$ and $\prev{}c_1 \in \mathcal{L}_1$ we immediately
get the following inexpressiveness results. Let $\mathcal{T}_1$ and $\mathcal{T}_2$ be two classes of TGNNs. We let $\mathcal{T}_1 \leq \mathcal{T}_2$
if for all $T \in \mathcal{T}_1$ there is $T' \in \mathcal{T}_2$ such that 
for all discrete, pointed temporal graphs $(\TG,v)$ we have that $T(\TG,v) = 1$ if and only if $T'(\TG,v) = 1$. 
Accordingly, we define $\mathcal{T}_1 \not\leq \mathcal{T}_2$ if $\mathcal{T}_1 \leq \mathcal{T}_2$ does not hold, and we define  $\mathcal{T}_1 \not\equiv \mathcal{T}_2$ if $\mathcal{T}_1 \not\leq \mathcal{T}_2$ or $\mathcal{T}_2 \not\leq \mathcal{T}_1$.
\begin{corollary}
    \label{cor:exprrelation}
    $\idTGNN[\hat{\mathcal{M}}] \not\leq \tandgTGNN[\hat{\mathcal{M}},\mathcal{F}]$, $\idTGNN[\hat{\mathcal{M}}] \not\leq \globTGNN[\hat{\mathcal{M}}_\msg, \mathcal{Q}_\mathsf{time2vec}, \con]$, and 
    $\tandgTGNN[\hat{\mathcal{M}},\mathcal{F}] \not\equiv \globTGNN[\hat{\mathcal{M}}_\msg, \mathcal{Q}_\mathsf{time2vec}, \con]$.
\end{corollary}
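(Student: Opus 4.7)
The plan is to exhibit the two concrete separating formulas suggested by the hint, namely $\varphi_1 = \Diamond(c_1 \land \Diamond \prev{} c_2)$ and $\varphi_2 = \prev{} c_1$, and then combine the previous characterisation and inexpressiveness theorems in a bookkeeping fashion. A brief preliminary check verifies $\varphi_1 \in \mathcal{L}_2$ (its only $Q\psi$ subformula with $Q\in\{\prev{},\past{}\}$ is $\prev{} c_2$, and indeed $\Diamond\prev{} c_2 \in \sub(\varphi_1)$) and $\varphi_2 \in \mathcal{L}_1$ (vacuously, since $\varphi_2$ contains no $\Diamond$ subformula).

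First, I would establish $\idTGNN[\hat{\mathcal{M}}] \not\leq \tandgTGNN[\hat{\mathcal{M}},\mathcal{F}]$. Since $\varphi_1 \in \PTLK$, Theorem~\ref{sec:logic2tgnn;thm:ptlk} supplies a recursive TGNN $T \in \idTGNN[\hat{\mathcal{M}}]$ capturing $\varphi_1$. By Theorem~\ref{th:TandGweak}, for every $\mathcal{M}$ and $\mathcal{C}$ (in particular $\hat{\mathcal{M}}$ and $\mathcal{F}$) no time-and-graph TGNN captures $\varphi_1$, so no $T' \in \tandgTGNN[\hat{\mathcal{M}},\mathcal{F}]$ can agree with $T$ on all discrete pointed temporal graphs, as witnessed by the counterexample graphs from the proof of Theorem~\ref{th:TandGweak} (which are discrete). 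The analogous argument for $\varphi_2$, using Theorem~\ref{sec:logic2tgnn;thm:ptlk} together with Theorem~\ref{thm:globTGNNweak}, yields $\idTGNN[\hat{\mathcal{M}}] \not\leq \globTGNN[\hat{\mathcal{M}}_\msg, \mathcal{Q}_\mathsf{time2vec}, \con]$.

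For the last claim I would prove the two non-inclusions separately. Using $\varphi_1 \in \mathcal{L}_2$, Theorem~\ref{sec:variants_logic2tnn;thm:globaltgnn} provides a global TGNN capturing $\varphi_1$, while Theorem~\ref{th:TandGweak} forbids any time-and-graph TGNN from doing so; hence $\globTGNN[\hat{\mathcal{M}}_\msg, \mathcal{Q}_\mathsf{time2vec}, \con] \not\leq \tandgTGNN[\hat{\mathcal{M}},\mathcal{F}]$. Symmetrically, from $\varphi_2 \in \mathcal{L}_1$, Theorem~\ref{sec:variants_logic2tnn;thm:tandg} gives a time-and-graph TGNN capturing $\varphi_2$, while Theorem~\ref{thm:globTGNNweak} excludes any global TGNN; hence $\tandgTGNN[\hat{\mathcal{M}},\mathcal{F}] \not\leq \globTGNN[\hat{\mathcal{M}}_\msg, \mathcal{Q}_\mathsf{time2vec}, \con]$. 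Combined, these give $\tandgTGNN[\hat{\mathcal{M}},\mathcal{F}] \not\equiv \globTGNN[\hat{\mathcal{M}}_\msg, \mathcal{Q}_\mathsf{time2vec}, \con]$.

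There is no genuine obstacle: the proof is a direct corollary of Theorems~\ref{sec:logic2tgnn;thm:ptlk}--\ref{sec:variants_logic2tnn;thm:globaltgnn} together with the elementary fragment-membership observations above. The only mild subtlety is ensuring that the inexpressiveness witnesses used in Theorems~\ref{th:TandGweak} and \ref{thm:globTGNNweak} are discrete pointed temporal graphs, as required by the definition of $\leq$ between TGNN classes; this is the case by inspection of the counterexamples in those proofs.
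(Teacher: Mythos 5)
Your proof is correct and takes essentially the same route as the paper, which also obtains the corollary by combining Theorems~\ref{sec:logic2tgnn;thm:ptlk} and \ref{th:TandGweak}--\ref{sec:variants_logic2tnn;thm:globaltgnn} with the observations that $\Diamond(c_1 \land \Diamond\prev{}c_2) \in \mathcal{L}_2$ and $\prev{}c_1 \in \mathcal{L}_1$ (using each formula as a separating witness in exactly the way you describe). Your explicit check that the counterexample graphs in the inexpressiveness proofs are discrete is a small point of care that the paper leaves implicit.
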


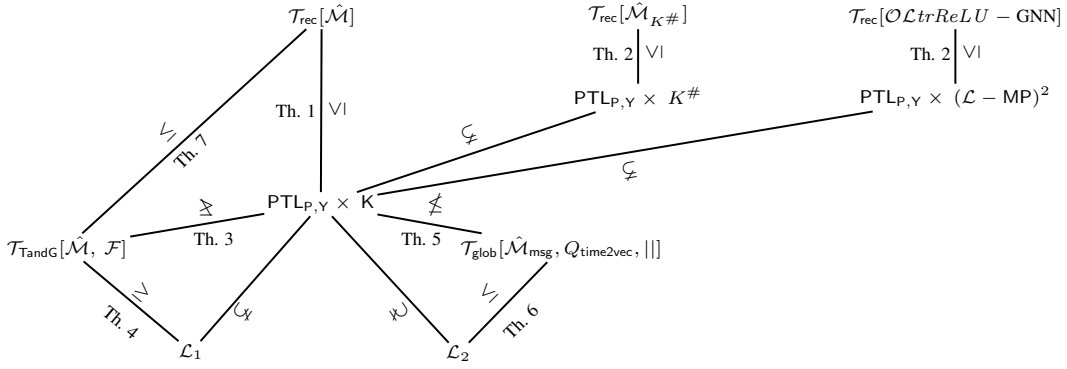
\begin{figure}
\tikzset{every picture/.style={line width=0.75pt}} 

\begin{tikzpicture}[x=0.75pt,y=0.75pt,yscale=-0.8,xscale=0.8, every node/.style={font=\scriptsize}]


\draw (295,158.4) node [anchor=north,name=ptlk] [inner sep=0.75pt]    {$\mathsf{PTL}_{\mathsf{P,Y}} \times \ \mathsf{K}$};
\draw (295.5,55.6) node [anchor=south,name=Trec] [inner sep=0.75pt]    {$\mathcal{T}_{\mathsf{rec}}[\hat{\mathcal{M}}]$};
\draw (95,184.4) node [anchor=north west, name=TandG][inner sep=0.75pt]    {$\mathcal{T}_{\mathsf{TandG}}[\hat{\mathcal{M}} ,\ \mathcal{F}]$};
\draw (223,252.4) node [anchor=north east, name=Lone] [inner sep=0.75pt]    {$\mathcal{L}_{1}$};
\draw (372,253.4) node [anchor=north west, name=Ltwo][inner sep=0.75pt]    {$\mathcal{L}_{2}$};
\draw (381,184.4) node [anchor=north west, name=Tglob][inner sep=0.75pt]    {$\mathcal{T}_{\mathsf{glob}}[\hat{\mathcal{M}}_{\mathsf{msg}} ,Q_{\mathsf{time2vec}} ,||]$};

\draw (495,90) node [anchor=north, name=Khash] [inner sep=0.75pt] {$\mathsf{PTL}_{\mathsf{P,Y}} \times \ K^{\#}$};
\draw (495,55.6) node [anchor=south, name=TrecKhash] [inner sep=0.75pt]  {$\mathcal{T}_{\mathsf{rec}}[\hat{\mathcal{M}}_{K^{\#}}]$};
\draw (695,90) node [anchor=north, name=LMP] [inner sep=0.75pt]  {$\mathsf{PTL}_{\mathsf{P,Y}} \times \ (\mathcal{L} -\mathsf{MP})^{2}$};
\draw (695,55.6) node [anchor=south, name=TrecLMP] [inner sep=0.75pt]   {$\mathcal{T}_{\mathsf{rec}}[\mathcal{OL}\mathit{trReLU} -\text{GNN}]$};
\draw (390.79,128.05) node [anchor=south] [inner sep=0.75pt]   [rotate=20]{$\varsubsetneq $};
\draw (490.79,151.05) node [anchor=south] [inner sep=0.75pt]   [rotate=10]{$\varsubsetneq $};

\draw (338.79,238.05) node [anchor=south] [inner sep=0.75pt]  [rotate=-55,xscale=-1]  {$\varsubsetneq $};
\draw (251.21,238.05) node [anchor=south] [inner sep=0.75pt]  [rotate=-305]  {$\varsubsetneq $};
\draw (202.6,125.6) node [anchor=south] [inner sep=0.75pt]  [rotate=-315]  {$\leq $};
\draw (405.6,226.6) node [anchor=south] [inner sep=0.75pt]  [rotate=-315]  {$\leq $};
\draw (177.6,226.6) node [anchor=south] [inner sep=0.75pt]  [rotate=-40,xscale=-1]  {$\leq $};
\draw (298.31,106) node [anchor=north] [inner sep=0.75pt]  [rotate=-270]  {$\leq $};
\draw (293.91,107) node [anchor=east] [inner sep=0.75pt]   [align=left] {Th. 1};
\draw (172.7,235.93) node [anchor=north] [inner sep=0.75pt]  [rotate=-40] [align=left] {Th. 4};
\draw (417.12,235.93) node [anchor=north] [inner sep=0.75pt]  [rotate=-315] [align=left] {Th. 6};
\draw (209.12,128.6) node [anchor=north] [inner sep=0.75pt]  [rotate=-315] [align=left] {Th. 7};
\draw (215.72,175.12) node [anchor=south east] [inner sep=0.75pt]  [xscale=-1]  {$\not\leq $};
\draw (213,182) node [anchor=north west][inner sep=0.75pt]  [align=left] {Th. 3};
\draw (373,182) node [anchor=north east] [inner sep=0.75pt] [align=left] {Th. 5};
\draw (374.28,175.12) node [anchor=south east] [inner sep=0.75pt]   {$\nleq $};

\draw (498.31,70) node [anchor=north] [inner sep=0.75pt]  [rotate=-270]  {$\leq $};
\draw (493.91,71) node [anchor=east] [inner sep=0.75pt]   [align=left] {Th. 2};
\draw (698.31,70) node [anchor=north] [inner sep=0.75pt]  [rotate=-270]  {$\leq $};
\draw (693.91,71) node [anchor=east] [inner sep=0.75pt]   [align=left] {Th. 2};

\draw[-] (Trec) -- (ptlk);
\draw[-] (Trec) -- (TandG);
\draw[-] (ptlk) -- (TandG);
\draw[-] (TandG) -- (Lone);
\draw[-] (ptlk) -- (Lone);
\draw[-] (ptlk) -- (Ltwo);
\draw[-] (ptlk) -- (Tglob);
\draw[-] (Tglob) -- (Ltwo);

\draw[-] (Khash) -- (ptlk);
\draw[-] (LMP) -- (ptlk);
\draw[-] (Khash) -- (TrecKhash);
\draw[-] (LMP) -- (TrecLMP);

\end{tikzpicture}
    \caption{An overview of our expressive power results}
    \label{fig:overview}
\end{figure}


To complement the picture of expressiveness relationships obtained so far, 
we  establish the following relationship between time-and-graph and recursive TGNNs;
recall that  $\mathcal{F} \subseteq \FNN{\trelu}$ is the class of single layer FNNs with 
truncated-ReLU activations.
\begin{restatable}{theorem}{idstrongest}
    \label{thm:idstrongest}
$\tandgTGNN[\hat{M}, \mathcal{F}] \leq \idTGNN[\hat{M}]$.
\end{restatable}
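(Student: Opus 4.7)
The plan is to simulate a time-and-graph TGNN by a recursive TGNN by running the two MPNNs $M_1$ and $M_2$ in parallel on two disjoint blocks of coordinates inside a single MPNN $M$, and folding the cell function into $M$'s final layer.

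Let $\model = (M_1, M_2, \cell, \out) \in \tandgTGNN[\hat{\mathcal{M}}, \mathcal{F}]$ with $M_1, M_2$ sharing the same number $k$ of layers. I will construct $\model' = (M, \out) \in \idTGNN[\hat{\mathcal{M}}]$ whose MPNN $M$ also has $k$ layers, and whose node embeddings are split into two blocks tracking the hidden states of $M_1$ and $M_2$ respectively. For each layer $i < k$, I would take the combination function of $M$ to be the block-diagonal composition
$$\com^i_M\bigl([a_1, a_2],[b_1, b_2]\bigr) \;=\; \bigl[\com^i_{M_1}(a_1, b_1),\; \com^i_{M_2}(a_2, b_2)\bigr],$$
which is realisable in $\FNN{\trelu}$ by assembling the weight matrices of $\com^i_{M_1}$ and $\com^i_{M_2}$ in block-diagonal form with zero-padded biases. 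For the final layer, I fold in the cell function:
$$\com^k_M\bigl([a_1, a_2],[b_1, b_2]\bigr) \;=\; \cell\bigl(\com^k_{M_1}(a_1, b_1),\; \com^k_{M_2}(a_2, b_2)\bigr),$$
which still lies in $\FNN{\trelu}$, since $\cell \in \mathcal{F} \subset \FNN{\trelu}$ and $\FNN{\trelu}$ is closed under layer composition. Each layer of $M$ uses entrywise sum as its aggregation, which parallelises the sum aggregations of $M_1$ and $M_2$ blockwise, because summation acts componentwise.

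Correctness will then follow by induction on the timestamp $j$. The input labels seen by $M$ at time $t_j$ are $\h_v^{(0)}(t_j) = c_j(v) \con \h_v^{(k)}(t_{j-1})$, with the convention $\h_v^{(k)}(t_0) = \boldsymbol{0}$. The first block $c_j(v)$ is exactly what $M_1$ consumes on $G_j$, while by the inductive hypothesis the second block equals $\h_v(t_{j-1})$, matching the initial labels $M_2$ consumes on $(V_j, E_j, [u \mapsto \h_u(t_{j-1})])$. Since the two blocks never interact through the parallel combinations and the sum aggregation, the first $k{-}1$ layers of $M$ produce the parallel intermediate hidden states of $M_1$ and $M_2$ at every node, and layer $k$ applies $\cell$ on top, yielding $\h_v^{(k)}(t_j) = \h_v(t_j)$. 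Reusing $\out$ unchanged then guarantees identical classifications at $t_n$; the base case $j = 1$ is handled uniformly, since the convention $\h_v^{(k)}(t_0) = \boldsymbol{0}$ is exactly the zero initialisation used by $M_2$ in $\tandgTGNN$ at $t_1$.

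I do not anticipate a serious obstacle here. The only points requiring care are verifying that the block-diagonal assembly of two truncated-ReLU FNNs, followed by precomposition with the single-layer FNN $\cell$, still belongs to $\FNN{\trelu}$ (which is immediate from standard closure properties of truncated-ReLU feedforward networks), and that the parallel execution faithfully reproduces both $M_1$ and $M_2$ at every layer, which reduces to the componentwise behaviour of sum aggregation on block-split inputs.
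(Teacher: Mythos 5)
Your construction is correct and is essentially the paper's own proof: both run $M_1$ and $M_2$ block-diagonally inside a single MPNN with sum aggregation and then apply $\cell$, reusing $\out$ unchanged. The only (cosmetic) difference is that you fold $\cell$ into the combination function of the final layer, whereas the paper appends one extra message-passing layer whose combination ignores the aggregated input and applies $\cell$; both are fine since combination functions in $\hat{\mathcal{M}}$ may be multi-layer truncated-ReLU FNNs.
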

\begin{proof}
    Let $T \in \tandgTGNN[\hat{M}, \mathcal{F}]$ with $T = (M_1, M_2, \cell, \out)$, where each $M_i$ is of input
    dimensionality $m_i$ and output dimensionality $n_i$, and $\cell$ is of input dimensionality
    $n_1+n_2$ and output dimensionality $n_3$. The construction of the witness $T' \in \idTGNN[\hat{M}]$ is straightforward: $T'$ is given 
    by $(M_3, \out)$, where $M_3$ is given by  $\cell \circ M_1 || M_2$, which represents 
    the MPNN of input dimensionality $m_1 + m_2$ and output dimensionality $n_3$ simulating 
    $M_1$ on the first $m_1$ input dimensions, $M_2$ on the last $m_2$ input dimensions, and then applies a
    message-passing layer, where the combination function ignores input from $\agg$ and uses $\cell$ otherwise.
\end{proof}
Figure~\ref{fig:overview} summarises the expressive results we have established in the paper.





\section{Conclusion}
\label{sec:outlook}

We have initiated the study of the expressive capabilities of TGNNs  using logical languages,
which is motivated by  successful logical characterisation of static GNNs.
As we have showed, product logics combining temporal and modal logics, are particularly well-suited to achieve this goal.
In particular, we have studied three classes of TGNNs: recursive, time-and-graph, and global TGNNs.
We have showed that 
recursive TGNNs can express all properties definable in the product logic $\PTLK$, combining the standard temporal and modal logics.
Moreover we have obtained analogous results by relating variants of $\PTLK$ to recursive TGNNs that exploit specific classes of MPNNs.
In contrast, neither time-and-graph or global TGNNs can express all properties definable in $\PTLK$.
The reason is that $\PTLK$ allows for arbitrary interaction between logical operators expressing temporal and spatial properties.
By restricting the form of these interaction, we have obtained fragments of $\PTLK$ which can be captured by time-and-graph or global TGNNs, respectively.

Our results provide  new insights into the expressive power of TGNNs and show that  TGNN architectures significantly differ on the spatio-temporal properties they can capture.
Better understanding of TGNN capabilities is crucial for choosing appropriate models for a downstream task and help in developing more powerful architectures.
Since this is the first work on the logical expressiveness of TGNNs, it has a number of interesting next steps that can be performed in future. 
Among others, we plan to investigate tight expressive bounds, and for a larger amount of TGNN architectures. Furthermore, we plan to transfer results known from research on product logics to TGNNs, including computational complexity analysis and finite model theory.

\paragraph{Limitations.} 
The results established here are of a strictly formal nature.
Our expressive results focus on showing which classes of TGNNs can express (i.e. detect) temporal graph properties expressible in particular logics. 
Hence our results do not aim to show which properties can be learnt in practice, but to show fundamental relations between expressiveness of TGNNs and product logics.



\begin{ack}
This research is partially funded by the European Union (ERC, ExtenDD, project number: 101054714). Views and opinions expressed are however those of the authors only and do not necessarily reflect those of the European Union or the European Research Council. Neither the European Union nor the granting authority can be held responsible for them.
\end{ack}


\bibliographystyle{abbrv}
\bibliography{tgnnbiblio}


\appendix


\section{Omitted definitions}
\label{app:definitions}

In this section, we provide full formal definitions of concepts that were introduced informally
in the main part of the paper. 

\subsection{Feedforward neural networks}
\label{def:fnn}

We present a formal definition of the classical feedforward neural network (FNN) model.
\begin{definition}
An \emph{FNN node} $v$ is a function from $\mathbb{R}^m$ to $\mathbb{R}$, where $m \in \mathbb{N}$ is the
\emph{input dimension}, computing
\begin{displaymath}
    v(x_1, \dotsc, x_n) = \sigma(b + \sum_{i=1}^n w_i x_i),
\end{displaymath} 
where $w_i \in \mathbb{Q}$ are the \emph{weights}, $b \in \mathbb{Q}$ is the \emph{bias},
and $\sigma: \R \to \R$ is the activation function.
An \emph{FNN layer} $\ell$ is a tuple consisting of some number $n \in \mathbb{N}$ FNN nodes
$(v_1, \dotsc, v_n)$, all having the same input dimensionality $m$. Then, $\ell$ computes the function
from $\mathbb{R}^m$ to $\mathbb{R}^n$, where $n$ is the \emph{output dimension}, given by
\begin{displaymath}
    \ell(x_1, \dotsc, x_m) = (v_1(x_1, \dotsc, x_m), \dotsc, v_n(x_1, \dotsc, x_m)).
\end{displaymath}
Finally, a \emph{feedforward neural network (FNN)} $N$ is a tuple of some $k \in \N$ FNN layers
$(\ell_1, \dotsc, \ell_k)$, where the input dimension of $\ell_{i+1}$ is equal to
the output dimension of $\ell_i$ for all $i < k$. Then, $N$ computes the function
from $\mathbb{R}^{m_1}$ to $\mathbb{R}^{n_k}$, where $m_1$ is the input dimensionality of nodes in layer $l_1$ and 
$n_k$ is the number of nodes in layer $l_k$, given by
\begin{displaymath}
    N(x_1, \dotsc, x_{m_1}) = \ell_k(\dotsb \ell_1(x_1, \dotsc, x_{m_1}) \dotsb).
\end{displaymath}
\end{definition}

Given this definition, we denote by $\FNN{\trelu}$ the class of all FNN where all nodes exclusively 
use $\trelu(x) = \max(0,\min(x,1))$ as the activation function. 

\subsection{The product logic $\PTLK$}
\label{def:ptlk}

In the following, we define one of the key logics we consider, namely $\PTLK$.
\begin{definition}
    \label{sec:logic2tgnn;def:ptlk}
    We define the logic \PTLK{} given by all formulae $\varphi$ defined by the grammar:
    \begin{displaymath}
        \varphi ::= c_j \mid \neg \varphi \mid \varphi \land \varphi \mid \Diamond \varphi \mid \prev{} \varphi \mid 
    \past{} \varphi      
    \end{displaymath}
    where $0 \le j < k$ for some set of $k$ colours.
    Let $(\TG,v)$ be a coloured, pointed temporal graph. Since the temporal operators in this logic are oblivious of the exact timestamps 
    in $\TG$ we assume that $\TG$ is discrete. We say that $(\TG,v)$ satisfies formula $\varphi$, written $(\TG,(v,t_n)) \models \varphi$, if given:
    \begin{align*}
        &\TG,(v,t_i) \models c_j && \text{iff} && (c_{t_i}(v))_j = 1,  \\
        &\TG,(v,t_i) \models \neg \varphi && \text{iff} && \TG, (v,t_i) \not\models \varphi, \\ 
        &\TG,(v,t_i) \models \varphi_1 \land \varphi_2 && \text{iff} && \TG,(v,t_i) \models \varphi_1 \text{ and } \TG, (v,t_i) \models \varphi_2, \\ 
        &\TG,(v,t_i) \models \Diamond \varphi && \text{iff} && \text{there is } u \in V \text{ s.t. } \{v,u\} \in E_{i} \text{ and } \TG,(u,t_i) \models \varphi, \\
        &\TG,(v,t_i) \models \prev{} \varphi && \text{iff} && t\neq 0 \text{ and } \TG,(v,t_{i}-1) \models \varphi, \\ 
        &\TG,(v,t_i) \models \past{} \varphi && \text{iff} && \text{there is } t_j < t_i \text{ such that } \TG,(v,t_j) \models \varphi.   
    \end{align*}
\end{definition}

\subsection{The product logics $\tlogbenedikt$ and $\tlognunn$.}
\label{def:benediktnunn}

We give formal definitions of the logics $\tlogbenedikt$ and 
$\tlognunn$ based on the modal logics presented in \cite{BenediktLMT24} and \cite{NunnSST24}.
These follow the same line as the definition of $\PTLK$ given in Definition~\ref{sec:logic2tgnn;def:ptlk}.
\begin{definition}
    \label{app:logic2tgnn;def:tlogbenedikt}
    Formulae of $\tlogbenedikt$ are defined by the grammar:
    \begin{displaymath}
        \varphi ::=  \top \mid c_i \mid \sum_{i=1}^{k} a_i \cdot \# \varphi  \leq b \mid \neg \varphi \mid \varphi \land \varphi \mid \prev{} \varphi \mid 
    \past{} \varphi      
    \end{displaymath}
    where $c_i$ ranges over propositional variables in a finite set $C$ of colours, and 
    $a_i, b \in \Q$.
    Let $(\TG,v)$ be a coloured, pointed temporal graph. Since the temporal operators in this logic are oblivious of the exact timestamps 
    in $\TG$ we assume that $\TG$ is discrete. We say that $(\TG,v)$ satisfies formula $\varphi$, written $(\TG,(v,t_n)) \models \varphi$, if given:
    \begin{align*}
        &\TG,(v,t_i) \models \top && \text{iff} && \text{true},  \\
        &\TG,(v,t_i) \models c_j && \text{iff} && (c_{t_i}(v))_j = 1,  \\
        &\TG,(v,t_i) \models \sum_{i=1}^{k} a_i \cdot \# \varphi_i  \leq b && \text{iff} && \sum_{i=1}^{k} a_i \cdot |\{u \in V \mid \{v,u\} \in E_t \text{ and } \TG,(v,t_i) \models \varphi_i\}|  \leq b, \\
        &\TG,(v,t_i) \models \neg \varphi && \text{iff} && \TG, (v,t_i) \not\models \varphi, \\ 
        &\TG,(v,t_i) \models \varphi_1 \land \varphi_2 && \text{iff} && \TG,(v,t_i) \models \varphi_1 \text{ and } \TG, (v,t_i) \models \varphi_2, \\ 
        &\TG,(v,t_i) \models \prev{} \varphi && \text{iff} && t_i \neq 0 \text{ and } \TG,(v,t_i-1) \models \varphi, \text{ and} \\ 
        &\TG,(v,t_i) \models \past{} \varphi && \text{iff} && \text{there is } t' < t_i \text{ such that } \TG,(v,t') \models \varphi.   
    \end{align*} 
\end{definition}
Although it does not affect any of our results, we note that \cite{BenediktLMT24} consider MPNNs operating
on directed graphs, whereas our focus is on static graphs with undirected edges. Consequently, there is a technical difference in the
definition of the semantics of the $\sum_{i=1}^{k} a_i \cdot \# \varphi_i \leq b$ quantifier compared to \cite{BenediktLMT24}.
\begin{definition}
    \label{app:logic2tgnn;def:tlognunn}
    Formulae of $\tlognunn$ are defined by the grammar:
    \begin{displaymath}
        \varphi ::=  c \mid \sum_{i=1}^{k} a_i \cdot 1_{\varphi}+ \sum_{i=1}^{k'} b_{i} \cdot \# \varphi \leq d \mid \neg \varphi \mid \varphi \land \varphi \mid \prev{} \varphi \mid 
    \past{} \varphi      
    \end{displaymath}
    where $c$ ranges over propositional variables in a finite set $C$ of colours, and 
    $a_i, b_i, d \in \mathbb{Z}$.

    Let $(\TG,v)$ be a coloured, pointed temporal graph. Since the temporal operators in this logic are oblivious of the exact timestamps 
    in $\TG$ we assume that $\TG$ is discrete. We say that $(\TG,v)$ satisfies formula $\varphi$, written $(\TG,(v,t_n)) \models \varphi$, if given:
    \begin{align*}
        &\TG,(v,t_i) \models c_j && \text{iff} && (c_{t_i}(v))_j = 1,  \\
        &\TG,(v,t_i) \models \neg \varphi && \text{iff} && \TG, (v,t_i) \not\models \varphi, \\ 
        &\TG,(v,t_i) \models \varphi_1 \land \varphi_2 && \text{iff} && \TG,(v,t_i) \models \varphi_1 \text{ and } \TG, (v,t_i) \models \varphi_2, \\ 
        &\TG,(v,t_i) \models \prev{} \varphi && \text{iff} && t_i\neq 0 \text{ and } \TG,(v,t_i-1) \models \varphi, \\ 
        &\TG,(v,t_i) \models \past{} \varphi && \text{iff} && \text{there is } t' < t_i \text{ such that } \TG,(v,t') \models \varphi, \text{ and} \\
        & \TG,(v,t_i) \models \sum_{i=1}^{k} a_i \cdot 1_{\varphi_i}+ \sum_{i=1}^{k'} b_{i} \cdot \# \varphi'_{i} \leq d && \text{iff} &&  
    \end{align*} 
    \begin{displaymath}
        \sum_{i=1}^{k} a_i \cdot \begin{cases}
            1 & \text{if } \TG,(v,t_i) \models \varphi_i \\ 
            0 & \text{otherwise,}
        \end{cases} + \sum_{i=1}^{k'} b_i \cdot |\{u \in V \mid \{v,u\} \in E_t \text{ and } \TG,(v,t_i) \models \varphi'_i\}|  \leq d, 
    \end{displaymath}
\end{definition}
We note here that we utilised a normal form of $\lognunn$ (refer to Theorem 1 in \cite{NunnSST24})
to simplify the definition of $\tlognunn$. As the authors point out,
it is straightforward to observe that each formula of the original $\lognunn$ 
syntax can be efficiently transformed into this normal form.

\subsection*{The time function time2vec}
\label{def:time2vec}

In the following, we define time2vec functions \cite{Kazemi2019}.
\begin{definition}
We define $\phi(t_i-t_l) = \mathsf{t2v}(t_i-t_l)$ as
\begin{align*}
    \mathsf{t2v}(t)_j =
    \begin{cases}
        w_j t + b_j & \text{if } j=0, \\
        \sigma(w_j t + b_j) & \text{otherwise,}
    \end{cases}
\end{align*}
where $w_j, b_j \in \Q$ and $\sigma$ is some periodic activation function.
\end{definition}
Correspondingly, we denote the class of all such functions by $\mathcal{Q}_{\mathsf{time2vec}}$.

\section{Omitted proofs}
\label{app:logic2tgnn}

In this section, we provide all the formal proofs for the results presented in
this work. This includes comprehensive proofs for the results
of the paper that were merely outlined in the main section.

\subsection{Proof of Theorem~\ref{sec:logic2tgnn;thm:ptlk}}
\label{proof:ptlk}
\ptlk*
\begin{proof}
    Let $\varphi$ be a formula of $\PTLK$ as defined in Definition~\ref{sec:logic2tgnn;def:leq} 
    with $m$ atomic subformulas. Let
    \begin{displaymath}
           \varphi_1, \dotsc, \varphi_{m}, \varphi_{m+1}, \dotsc, \varphi_n
    \end{displaymath}
    be an enumeration of the subformulas of $\varphi$ such that all atomic
    formulas are the $\varphi_1, \dotsc, \varphi_m$, and $\varphi_i \in \mathit{sub}(\varphi_j)$
    implies $i \le j$. In particular, we have $\varphi_n = \varphi$.

    We begin by describing how the TGNN $\model_\varphi$ is constructed. 
    We have $\model_\varphi = (M, \out)$, where the MPNN $M = (l^1, \dots, l^{n-m+1})$ and the layer 
    $l^{(i)}$ with $i \leq n-m$ is given by $(\com^{(i)}, \sum)$ with $\com^{(i)}(\boldsymbol{x}, \boldsymbol{y}) = \trelu(C\boldsymbol{x} + A\boldsymbol{y} + b)$, 
    and where $\sum$ denotes entrywise sum as aggregation. 
    The exact form of the $3n \times 3n$ matrices $C$, $A$, and the $n$-dimensional vector $b$ depends on $\varphi_{m+i}$ as follows:
    \begin{itemize}
    \item if $\varphi_{m+i} = \neg \varphi_j$, we have $C_{m+i,j} = -1$, and $b_{m+i} = 1$,
    \item if $\varphi_{m+i} = \varphi_{j_1} \land \varphi_{j_2}$, we have $C_{m+i,j_1} = C_{m+i,j_2} = 1$, and $b_{m+i} = -1$,
    \item if $\varphi_{m+i} = \Diamond \varphi_j$, we have $A_{m+i,j} = 1$,
    \item if $\varphi_{m+i} = \prev{} \varphi_j$, we have $C_{m+i, n+j} = 1$,
    \item if $\varphi_{m+i} = \past{} \varphi_j$, we have $C_{m+i,2n+j} = 1$, and 
    \end{itemize}
    for all $j \leq m$ we have $C_{j,j} = 1$. All other entries of $C$, $A$, and $b$ are zero. 
    This implies that all layers $l_i$ with $i \leq n-m$ use the same parameters. 
    The layer $l^{n-m+1} = (\com^{n-m+1},\sum)$ is given by  
    $\com^{n-m+1}(\boldsymbol{x}, \boldsymbol{y}) = \trelu(C'\boldsymbol{x} + A'\boldsymbol{y} + \boldsymbol{0})$, 
    where $A'$ is the $2n \times 3n$ all-zero matrix, $\boldsymbol{0}$ is the $2n$-dimensional all-zero vector, 
    and $C'$ is the $2n \times 3n$ matrix with $C'_{j,j} = 1$, $C'_{n+j,n+j} = 1$, and $C'_{n+j,2n+j} = 1$ for all $j \leq n$. 
    All other entries are zero.
    The output function is given by $\out(x_1, \dotsc, x_{2n}) = \trelu(x_n)$. 
    It is straightforward to see that $\model_\varphi \in \idTGNN[\mathcal{M}]$.

    Let $(\TG,v)$ be a pointed temporal graph where $\TG$ is of length $k$ and $V$ is its set of nodes. 
    Regarding correctness, we prove the following statement: for all nodes $u \in V$, all timepoints 
    $t_i$ with $i \leq k$ of $\TG$ and subformulas $\varphi_j$ with $j \leq n$ of $\varphi$ we have that
    \begin{enumerate}
        \item[a)] $\TG, (u,t_i) \models \varphi_j$ if and only if $\h^{(\min(0,j-m))}_u(t_i)_j = 1$, 
        \item[b)] $\TG, (u,t_{i}-1) \models \varphi_j$ if and only if $\h^{(\min(0,j-m))}_u(t_i)_{n+j} = 1$, and
        \item[c)] $\TG, (u,t_{i'}) \models \varphi_j$ for some $i' < i$ if and only if $\h^{(\min(0,j-m))}_u(t_i)_{2n+j} = 1$.
    \end{enumerate}
    We prove this statement via strong induction on $i$ and $j$. 
    
    \paragraph{Case: timestamp $t_1$.} First, let $i = 1$ and $j \in \{1, \dotsc, m\}$, and fix some $u \in V$.
    The assumption $j \in \{1, \dotsc, m\}$ implies that $\varphi_j$ is an atomic formula. Then, statement (a) is directly implied 
    by the form of $h^{(0)}_u(t_1)$, including the colours of node $u$. Similarly, we have that statements (b) and (c) are given as $\model_\varphi$ initialises the dimensions $n$
    to $3n$ of $h^{(0)}_u(t_1)$ with $0$, which is correct as we are considering the first timestamp of \TG. 
    Next, assume that the statement holds for $i = 1$, $j$, and all $u \in V$. Consider the case of $j+1$, fix some $u \in V$, and
    focus on statement (a). The Boolean cases $\varphi_{j+1} = \neg \varphi_{l_1}$ and $\varphi_{j+1} = \varphi_{l_1} \land \varphi_{l_2}$ are a straightforward implication
    of the form of matrix $C$, vector $b$, as well as the activation function $\trelu$, and
    the fact that statement (a) holds for all $l_1, l_2 \leq j$. Similarly, the case $\Diamond \varphi_{l_1}$ is implied
    by the form of matrix $A$, the activation function $\trelu$, and that statement (a) holds for all $l_1 \leq j$ and all $w \in V$,
    including the neighbours of $u$. In the case of $\varphi_{j+1} = \prev{} \varphi_{l_1}$ or $\varphi_{j+1} = \past{} \varphi_{l_1}$, we rely on the
    fact that $i = 1$, which means that these are necessarily false. By the hypothesis, this is implied by (b) and (c) for all $l_1 \leq j$ as 
    matrix $C$ utilises the corresponding dimensions $n+l_1$ and $2n+l_1$, respectively, which are all $0$. The fact that we consider $i=1$ and that 
    all dimenstions $l > n$ are 0 also immediately implies statements (b) and (c).

    \paragraph{Case: timestamp $t_i$ with $i > 1$.} Next, assume that statements (a) to (c) hold for $i$, all $j \leq n$, and all $u \in V$, and consider the case $i+1$.
    Once again, the argument for statement (a) with $j \in \{1, \dotsc, m\}$ is directly implied by the fact that $\model_\varphi$ stores
    the colours of each node $u$ in the respective dimension. Before addressing (b) and (c) for these $j$, consider the following two observations:
    \begin{enumerate}
        \item TGNN $\model_\varphi$ ensures for all timestamps $t_i$, $j \leq n$, and $u \in V$  that
        $\h^{(0)}_u(t_{i+1})_{n+j} = \h^{(n-m)}_u(t_{i})_{j}$, and it ensures that $\h^{(0)}_u(t_{i+1})_{2n+j} = 1$ if and only if 
        $\h^{(n-m)}_u(t_{i'})_{j}$ for some $i' \leq i$.
        \item TGNN $\model_\varphi$ ensures for all timestamps $t_i$, $u \in V$ that $\h^{(0)}_u(t_{i})_j = \h^{l}_u(t_i)_j$ holds
        for all $j \in \{n+1, \dotsc, 3n\}$ and $l \leq n-m$, and it ensures that $\h^{(j)}_u(t_{i})_j = \h^{l}_u(t_i)_j$ holds for all $j \leq n$ and $j \leq l \leq n-m$.
    \end{enumerate} 
    Informally, the first observation implies that dimensions $n+1$ to $2n$ store the previous state of a node, and $2n+1$ to
    $3n$ store the disjunction over all previous states of a node, where $1$ is interpreted as true and $0$ as false. This is
    achieved by the way we built layer $l^{n-m+1}$.
    The second observation simply states that, within a single timepoint $t_i$, the TGNN $\model_\varphi$ does not alter dimensions $n+1$ to $3n$
    within layers $l^{1}$ to $l^{n-m}$ and that if the semantics of $\varphi_j$ are computed in the $j$-th layer, they are preserved in subsequent layers.
    Now, consider statements (b) and (c) for $i+1$, $j \in \{1, \dotsc, m \}$ and some fixed $u \in V$. Here, the first observation 
    and induction hypothesis directly imply these statements. Therefore, keep $u \in V$ fixed and consider $j+1$, while assuming that (a) to (c) hold
    for all $j' \leq j$. The Boolean and modal cases are argued exactly as before. Thus, consider $\varphi_{j+1} = \prev{} \varphi_l$ for 
    some $l \leq j$. Using the first and second observations, we know that the semantics of $\varphi_l$ at timepoint $t_i$ are
    stored in $\h^{(j+1)}_u(t_{i+1})_{n+l}$, which is utilised by $\model_\varphi$ to compute the semantics of $\prev{} \varphi_l$. Given
    the induction hypothesis, this is correct. Similarly, the case $\varphi_{j+1} = \past{} \varphi_l$ is implied by the two observations
    and the induction hypothesis, which state that $\h^{(j+1)}_u(t_{i+1})_{2n+l}$ stores a $1$ if $\varphi_l$ was true at any timepoint before $t_{i+1}$ and $0$ otherwise.
    Given this, the correctness is immediate.

    Finally, the correctness of the theorem is given by statement (a) for $t_k$ and $\varphi_n = \varphi$ in combination 
    with observation that $\out$ uses the $n$-th dimension of $\h^{n-m+1}_v(t_k)$ to compute the overall output.
\end{proof}

\subsection{Proof of Theorem~\ref{sec:log2tgnn;thm:benediktnunn}}
\label{proof:benediktnunn}
For a formal definition of the logics $\tlogbenedikt$ and $\tlognunn$, we refer to the corresponding 
subsection of Appendix~\ref{def:benediktnunn}.

The key results we rely on are Theorem~26 of \cite{BenediktLMT24} and Theorem~1 of \cite{NunnSST24}. 
However, we need a stronger form of these results that includes the capturing of the semantics of each 
subformula with a MPNN. 
\begin{definition}
    Let $\varphi$ be an inductively built formula interpreted over pointed static graphs, 
    and let $M$ be an MPNN. 
    We say that \emph{$M$ inductively captures $\varphi$} if there exists an enumeration 
    of all subformulas $\varphi_1, \dotsc, \varphi_m$ of $\varphi$, where 
    $\varphi_i \in \mathit{sub}(\varphi_j)$ implies $i \leq j$, such that for all pointed graphs $(G,v)$
    and formulas $\varphi_i$ there is layer $j_i$ of $M$ such that
    \begin{itemize}
        \item if $(G,v) \models \varphi_i$, then $\h^l(v)_i = 1$ for all $l \geq j_i$, and 
        \item if $(G,v) \not\models \varphi_i$, then $\h^l(v)_i = 0$ for all $l \geq j_i$,
    \end{itemize}
    where $\h^l(v)$ represents the state of $v$ computed by the $j_i$-th layer of $M$. Furthermore, 
    we require that $j_{i-1} \leq j_{i}$ for all $i \in \{1, \dotsc, m\}$.
    We extend the notion of inductive capture to a logic $\mathcal{L}$ over pointed static graphs 
    and a class of MPNNs $\mathcal{M}$ in the obvious way and denote it by $\mathcal{L} \trianglelefteq \mathcal{M}$.
\end{definition}

Given this understanding, a close examination of the arguments employed in \cite{BenediktLMT24}
and \cite{NunnSST24} directly implies the inductive capture of the logics $\logbenedikt$ and $\lognunn$ 
by the respective classes of MPNNs.
\begin{lemma}
    \label{app:logic2tgnn;lem:benediktnunn_inductive}
    Let $\logbenedikt$ and $\lognunn$ be the logics introduced in \cite{BenediktLMT24} and 
    \cite{NunnSST24} and $\mathcal{O}\mathcal{L}\trelu$-MPNN and $\mathcal{M}_{\lognunn}$ 
    the respective classes of MPNNs. We have that 
    $\logbenedikt \trianglelefteq \mathcal{O}\mathcal{L}\trelu\text{-MPNN}$ and 
    $\lognunn \trianglelefteq \mathcal{M}_{\lognunn}$.
\end{lemma}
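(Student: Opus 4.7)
The plan is to observe that the original proofs of Theorem~26 of \cite{BenediktLMT24} and Theorem~1 of \cite{NunnSST24} already proceed by structural induction on the formula, building the MPNN layer-by-layer so that each subformula's truth value gets computed in a dedicated dimension. The task is therefore mainly a bookkeeping one: re-package those constructions to expose the fact that the hidden states at specific intermediate layers encode the semantics of the corresponding subformulas, and verify that these semantics, once computed, are preserved through all subsequent layers.

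First, fix an enumeration $\varphi_1, \dotsc, \varphi_m = \varphi$ of the subformulas of $\varphi$ such that all atomic subformulas (colours) come first and $\varphi_i \in \sub(\varphi_j)$ implies $i \leq j$. I would then construct an MPNN $M$ whose hidden-state dimension is at least $m$, with the invariant that, after layer $j_i$, the $i$-th dimension of every node $v$ equals $1$ if $(G,v) \models \varphi_i$ and $0$ otherwise, and that this dimension is not altered by any later layer. For atomic subformulas I set $j_i = 0$: the initial embedding $\h^{(0)}(v) = c(v)$ already stores the colour values in the appropriate dimensions, and since they are $\{0,1\}$-valued they can be carried forward unchanged using $\trelu$ identity weights. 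For compound $\varphi_i$, by induction there is a layer $j_{i-1}$ from which the truth values of $\varphi_1, \dotsc, \varphi_{i-1}$ are stably available. I append to $M$ the block of layers provided by the original construction for the top connective of $\varphi_i$ (Boolean, counting, or Presburger), arranged with a block-diagonal layout so that: the previously computed dimensions are passed through by identity; the new dimension $i$ reads from the dimensions of the direct subformulas of $\varphi_i$ and realises its semantics via combination and aggregation weights as in \cite{BenediktLMT24,NunnSST24}; and any auxiliary scratch dimensions used inside the block are zeroed out afterwards. Setting $j_i$ to be the index of the last layer of this block yields a non-decreasing sequence, as required.

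The main obstacle is verifying that the constructions in \cite{BenediktLMT24} and \cite{NunnSST24} are compositional in the precise sense demanded by inductive capture, that is, that their MPNN layers can be stacked without interference and while preserving previously computed $\{0,1\}$-valued dimensions. For the Boolean cases this is immediate from standard $\trelu$ identities such as $\trelu(x_{j_1}+x_{j_2}-1)$ and $\trelu(1-x_{j_1})$ acting on $\{0,1\}$ inputs. For the Presburger quantifier construction of \cite{BenediktLMT24} the aggregation and combination functions computing $\sum_i a_i \cdot \#\varphi_i \leq b$ only need to read the dimensions storing the truth values of the relevant immediate subformulas, and writing the result into a fresh dimension while keeping all others intact is achieved by a block-diagonal weight layout; the $\mathcal{OL}\trelu$ activation family is closed under such sums and differences. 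An analogous block-diagonal argument handles the mixed unary-plus-counting operator of $\lognunn$ in \cite{NunnSST24}, whose normal form recalled in Definition~\ref{app:logic2tgnn;def:tlognunn} was introduced precisely to make each operator realisable by a single aggregation-combination layer. In both papers this layer-by-layer structure is already present in the original proofs, so the inductive capture property is obtained by rereading those constructions with the additional invariant explicit rather than by inventing new machinery.
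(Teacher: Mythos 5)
Your proposal takes essentially the same approach as the paper's own proof: both rest on the observation that the constructions in Theorem~26 of \cite{BenediktLMT24} and Theorem~1 of \cite{NunnSST24} are already inductive over an enumeration of subformulas, with each subformula's truth value computed in a dedicated dimension and preserved by later layers, so inductive capture follows by rereading those constructions with the invariant made explicit. Your version spells out the bookkeeping (block-diagonal layouts, $\trelu$ identity propagation, scratch dimensions) in more detail than the paper, which simply asserts the compositionality, but the underlying argument is the same.
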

\begin{proof}
    Both results, namely Theorem 26 from \cite{BenediktLMT24} and Theorem 1 from \cite{NunnSST24},
    are constructive in nature: for each formula $\varphi$ the authors construct an MPNN $M_\varphi$
    that captures its semantics. Moreover, these constructions are inductive, meaning that all subformulas of $\varphi$ are
    enumerated and the MPNN $M_\varphi$ is built such that it captures the semantics of the $i$-th subformula with its 
    $i$-th layer and preserves it in subsequent layers. This establishes that these results in fact imply inductive capturing.
\end{proof}

Now, we are set to prove that both $\tlogbenedikt$ and $\tlognunn$ are captured by the respective classes of TGNNs.
\benediktnunn*
\begin{proof}
    First, we address the statement that $\tlogbenedikt \leq \idTGNN[\mathcal{O}\mathcal{L}\trelu\text{-GNN}]$. 
    Let $\varphi \in \tlogbenedikt$ with $k$ different subformulas of the form $\prev{} \psi$ or 
    $\past{} \psi$. 
    Let
    \begin{displaymath}
           \varphi_1, \dotsc, \varphi_{m_1}, \varphi_{m_1+1}, \dotsc, \varphi_{m_2}, \varphi_{m_2+1}, \dotsc, 
           \varphi_{m_k}, \varphi_{m_k+1}, \dotsc, \varphi_n
    \end{displaymath}
    be an enumeration of the subformulas of $\varphi$ such that 
    $\varphi_i \in \mathit{sub}(\varphi_j)$ implies $i \le j$, 
    $\varphi_i \neq \prev{} \psi$ and $\varphi_i \neq \past{} \psi$ if $i \notin \{m_1, \dotsc, m_k\}$, 
    and we have $\varphi_n = \varphi$.  
    Furthermore, we assume for each set $S_i = \{\varphi_{m_i+1}, \dotsc, \varphi_{m_{i+1}-1}\}$ with $m_0 = 1$ 
    that the enumeration is such that Lemma~\ref{app:logic2tgnn;lem:benediktnunn_inductive} applies to each 
    formula of $S_i$, where we interpret potential occurrence of $\varphi_j$ with $j \leq m_i$ as some 
    fresh atomic formula. 

    Let $\model_\varphi = (M, \out)$, where $\out(x_1, \dotsc, x_{2n}) = \trelu(x_n)$ and $M$ is constructed as follows.
    The initial layers are constructed to capture the semantics of the formulas $\varphi_1$
    to $\varphi_{m_1-1}$ in dimensions $1$ to $m_1-1$. The existence of such message-passing layers is guaranteed by the existence
    of MPNNs capturing the formulas of $S_0 = \{\varphi_1, \dotsc, \varphi_{m_1-1}\}$, as indicated by Lemma~\ref{app:logic2tgnn;lem:benediktnunn_inductive}.
    Furthermore, these message-passing layers are built with input and output dimensionality of $3n$, where dimensions
    $m_1$ to $3n$ are mapped by the identity function within the range $[0,1]$. The argument that these can be interconnected in $M$
    to form a well-formed MPNNs is given by the fact that the class
    $\mathcal{O}\mathcal{L}\trelu$-GNN encompasses allows for arbitrary single-layer FNN with truncated-ReLU activations as combinations.
    Next, consider $\varphi_{m_1}$ which is either $\prev{}\varphi_i$ or 
    $\past{}\varphi_i$ for some $i \leq m_1$. In the case of $\prev{}\varphi_i$, we add a layer that maps dimension
    $n+i$ to dimension $m_1$, and in the case of $\past{}\varphi_i$, we add a layer that maps dimension $2n+i$ to
    dimension $m_1$. Again, this is feasible due to the fact that $\mathcal{O}\mathcal{L}\trelu$-MPNN includes arbitrary
    single-layer FNN with truncated-ReLU activations as combinations. Subformulas $\varphi_{m_1+1}$ to $\varphi_{m_2-1}$
    are handled like $\varphi_1$ to $\varphi_{m_1-1}$ using the stack of MPNNs implied by Lemma~\ref{app:logic2tgnn;lem:benediktnunn_inductive} 
    for $S_1 = \{\varphi_{m_1+1}, \dotsc, \varphi_{m_2-1}\}$, combined accordingly. Then, $\varphi_{m_2}$ is addressed like $\varphi_{m_1}$ and so forth. Finally, we add
    a layer $l$ that computes the exact same function as $l^{n-m+1}$ in the proof of Theorem~\ref{sec:logic2tgnn;thm:ptlk}, which ensures 
    that previous and past semantics are preserved.
    Given this construction, we have that $\model_\varphi \in \idTGNN[\mathcal{O}\mathcal{L}\trelu\text{-MPNN}]$.

    The correctness argument follows the exact same line of reasoning as that in Theorem~\ref{sec:logic2tgnn;thm:ptlk},
    namely, arguing inductively over the timepoints $i$ of a temporal graph and subformulae $\varphi_j$.
    However, because we have not explicitly constructed $\model_\varphi$, we utilize the following arguments.
    For non-temporal subformulae $\varphi_j$, we rely on the fact that Lemma~\ref{app:logic2tgnn;lem:benediktnunn_inductive}
    applies to $\mathcal{O}\mathcal{L}\trelu$-MPNN, which means that $\varphi_j$ is captured after the respective layer.
    Note that this includes the observation that the assumptions we made for $S_i$ in order to apply Lemma~\ref{app:logic2tgnn;lem:benediktnunn_inductive} 
    makes no difference in the overall computation of $M$. Otherwise, we use precisely the same arguments. 

    Finally, for the case of $\tlognunn \leq \idTGNN[\tlognunn]$, we note that Lemma~\ref{app:logic2tgnn;lem:benediktnunn_inductive}
    is applicable to both $\lognunn$ and $\mathcal{M}_{\lognunn}$. Moreover, the properties regarding the combination of MPNN layers we needed above
    are also given for $\mathcal{M}_{\lognunn}$. Therefore, the argumentation presented above is equivalently applicable.
\end{proof}

\subsection{Proof of Theorem~\ref{th:TandGweak}}
\label{proof:tandgweak}
\tandgweak*
\begin{proof}
    Let $\varphi \in \PTLK$ be the formula $\varphi = \Diamond(c_1 \land \prev{}\Diamond c_2)$. We prove that there is no 
    TGNN $T \in \tandgTGNN[\mathcal{M}, \mathcal{C}]$ for any class of MPNNs $\mathcal{M}$ and 
    functions $\mathcal{C}$ such that for all pointed temporal graphs $(\TG,v)$ we have that  $(\TG,v) \models \varphi$ 
    if and only if $T(\TG,v) = 1$.

    We consider the two pointed temporal graphs $(\TG,v)$ and $(\TG',v')$ 
    fully specified in form of Figure~\ref{fig:tandgweak_ex}.
    It can be easily verified that $(\TG,v) \models \varphi$ and $(\TG',v') \not\models \varphi$.
    Let $T \in \tandgTGNN[\hat{\mathcal{M}}, \mathcal{C}]$ be some TGNN. 
    It is straightforward to see that for $M_1$ of $T$, we have that (a) $M_1(G_1,v) = M_1(G'_1,v')$, $M_1(G_1,u_1) = M_1(G_1',w_1')$, $M_1(G_1,u_2) = M_1(G_1',w_2')$,  $M_1(G_1, w_1) = M_1(G_1',u_1')$, and $M_1(G_1,w_2) = M_1(G_1',u_2')$. 
    This follows from the fact that $M_1$ is a function and the input of 
    the aggregation is mutliset without any ordering.
    Similarly, this implies (b) $M_1(G_2,x) = M_1(G'_2,x')$ for $x \in \{v,u_1,u_2,w_1,w_2\}$. Also, we have that $M_2$ outputs the same vector for 
    each pair of nodes $x$ and $x'$ at timestamp $t_1$, due to the fact that all labels are assumed to be $\boldsymbol{0}$ and $E_1 = E_1'$. 
    Thus, the equalities of (a) are preserved by the application of $\cell$ in the first timestamp $t_1$.
    
    Now, using (a), the same kind of argument gives that $M_2$ outputs the same vector for $v$ and $v'$ at timestamp $t_2$.
    Then, combining this with (b), we find that the function $\cell$ receives identical inputs in the cases of
    $(\TG,v)$ and $(\TG',v')$ at timestamp $t_2$, leading to the conclusion that either both are accepted or both are rejected by $T$. 
    Thus, $T$ does not accept the exact same set of pointed temporal graphs as $\varphi$.
\end{proof}

\subsection{Proof of Theorem~\ref{sec:variants_logic2tnn;thm:tandg}}
\label{proof:tandg}

\tandg*
\begin{proof}
    The proof follows a similar line of reasoning as our previous results, notably the arguments presented
    in the proof of Theorem~\ref{sec:logic2tgnn;thm:ptlk}. Let $\varphi$ be a formula of the fragment 
    $\mathcal{L}_1$ , and let $\varphi_1, \dotsc, \varphi_n $ where $\varphi_i \in \mathit{sub}(\varphi_j)$
    implies $i \le j$. Specifically, we have $\varphi_n = \varphi$.

    Let $\model_\varphi = (M_1, M_2, \cell, \out)$, where $\out(x_1, \dots, x_{2n}) = \trelu(x_n)$. The components
    $M_1$, $M_2$, and $\cell$ are constructed as outlined below. $M_1$ and $M_2$ consist of layers
    $l^{(i)}_1$ and $l^{(i)}_2$, respectively. In the case of $M_1$ we include a layer $l^{(0)}_1$ that 
    maps vector $\boldsymbol{x} \in \{0,1\}^m$ to $\boldsymbol{x} \con \boldsymbol{0} \in \{0,1\}^m \times \{0\}^{n-m}$ and 
    in the case of $M_2$ we include a layer $l^{(0)}_2$ that 
    maps vector $\boldsymbol{x} \in \{0,1\}^{2n}$ to $\boldsymbol{0} \con \boldsymbol{x} \in \{0\}^{n} \times \{0,1\}^{2n}$.
    Each layer $l^{(i)}_j$ with $j \in \{1,2\}$ is represented by $(\com^{(i)}_j, \sum)$, where $\sum$ means entrywise sum as aggregation and
    $\com^{(i)}_j(\boldsymbol{x}, \boldsymbol{y}) = \trelu(C_j\boldsymbol{x} + A_j\boldsymbol{y} + \boldsymbol{b}_j)$, where $C_1, A_1 \in \{0,1\}^{n\times n}$, $\boldsymbol{b}_1 \in \{0,1\}^n$ and $C_2,A_2 \in \{0,1\}^{3n\times 3n}$, 
    $\boldsymbol{b}_2\in \{0,1\}^{3n}$. The function $\cell$ is represented by an FNN $N_\cell$ of input dimensionality $3n$ and output dimensionality $2n$. 
    The entries of $C_j$, $A_j$, $\boldsymbol{b}_j$ with $j \in \{1,2\}$, and the exact form of $N_\cell$  
    are determined by the enumeration of the subformulas $\varphi_i$. We distinguish three cases, depending on the nature of $\varphi_i$.

    Firstly, let $\varphi_i$ be such that there is no subformula $Q\psi \in \sub(\varphi_i)$ with $Q \in \{\prev{}, \past{}\}$.
    In this case, we set the parameters of $C_1$, $A_1$, and $\boldsymbol{b}_1$ based on $\varphi_i$ as shown in the proof of
    Theorem~\ref{sec:logic2tgnn;thm:ptlk}. Informally, this means the semantics of $\varphi_i$ are checked by $M_1$.
    
    
    Secondly, consider $\varphi_i$ where for every subformula $c \in \sub(\varphi_i)$, there exists
    $Q\psi \in \sub(\varphi_i)$ with $Q \in \{\prev{}, \past{}\}$ such that $c \in \sub(\psi)$.
    We configure the parameters $C_2$, $A_2$, and $\boldsymbol{b}_2$ based on $\varphi_i$
    following the procedure described in the proof of Theorem~\ref{sec:logic2tgnn;thm:ptlk}.
    However, for $\varphi_i = \prev{} \varphi_j$, we use input dimension $n+j$, and for
    $\varphi_i = \past{} \varphi_j$, we use $2n+j$. 
    We also add a final layer to $M_2$, which maps
    vectors $\boldsymbol{x}_1 \con \boldsymbol{x}_2 \con \boldsymbol{x}_3$, where
    $\boldsymbol{x}_j \in \{0,1\}^n$, to $\boldsymbol{x}_1 \con \boldsymbol{x}_3$.

    Thirdly, formulas $\varphi_i$ that do not belong to the first or second category are handled by $N_\cell$
    of input dimensionality $3n$ and output dimensionality $2n$.
    The definition of the fragment $\mathcal{L}_1$ ensures that
    $\varphi_i = \neg \psi$ or $\varphi_i = \psi_1 \land \psi_2$. We refer to works such as \cite{SL22}, which demonstrate
    how to construct single-layer FNNs to check Boolean conditions. Besides processing these $\varphi_i$, the FNN
    $N_\cell$ maps input $x_i$ corresponding to $\varphi_i$ of the first category 
    by using the identity (simply realised by $\trelu(x)$) to output $y_i$ and input $x_{n+i}$ corresponding to 
    $\varphi_i$ from the second category are mapped identically to output $y_i$ as well. Additionally, for each $j \leq n$, 
    we add a component that computes $\trelu(x_j + x_{n+j} + x_{2n+j})$ as the $n+j$th output, ensuring that all $\past{}\psi$
    subformulas are correctly processed.

    Regarding correctness, the argument follows the same inductive approach as in Theorem~\ref{sec:logic2tgnn;thm:ptlk}.
    To avoid repetition, we provide a high-level outline here. Concerning the MPNN $M_1$,
    it is immediately implied by the construction of Theorem~\ref{sec:logic2tgnn;thm:ptlk} that it functions as expected, 
    meaning that it correctly captures the semantics of $\varphi_i$ of the first category. This is due to the fact that it is only utilised to check
    formulas devoid of temporal operators. For $M_2$, consider that at the initial timestamp $t_1$,
    its inputs are pointed graphs $((V_1,E_1, [u \mapsto \boldsymbol{0}]), v)$, where all labels of
    $G_1$ are replaced by $\boldsymbol{0}$. Since $M_2$ is only used to verify formulas concerning past timepoints, 
    the base case is correct.
    Otherwise, the inputs for $M_2$ are $((V_i, E_i, [u\mapsto \h_u(t_{i-1})]), v)$, where $\h_u(t_{i-1})$ denotes the output
    of $\cell$ for node $u$ at the previous timestamp. Here, our construction ensures that $\h_u(t_{i-1}) = \h^{\prev{}}_u(t_{i-1}) \con \h^{\past{}}_u(t_{i-1})$,
    where $\h^{\prev{}}_u(t_{i-1}) \in \{0,1\}^n$ contains the semantics of each subformula $\varphi_i$ at timestamp $t_{i-1}$ and 
    $\h^{\past{}}_u(t_{i-1}) \in \{0,1\}^n$ contains semantics of each subformula $\varphi_i$, disjunctively combined over all
    $t_j$ with $j \leq i-1$. Due to the way $\mathcal{L}_1$ is defined, these are the necessary informations to compute the semantics of $\varphi_i$ of the second category 
    at timepoint $t_i$. Finally, $N_\cell$ uses the outputs of $M_1$ and $M_2$ to compute the semantics of the remaining subformulas. 
    Due to the way the fragment $\mathcal{L}_1$ is defined, these can not be of the form $\Diamond\psi$, which $N_\cell$ could not handle,
    or $Q\psi$ with $Q \in \{\prev{}, \past{}\}$, which are already handled by $M_2$. This leaves only Boolean formulas. Otherwise, $N_\cell$
    ensures consistency and, thus, produces the output $\h_u(t_i) \con \h_u^{\past{}}(t_i)$, where
    $\h_u(t_i) \in \{0,1\}^n$ contains the semantics of all subformulas of $\varphi$ at timepoint $t_i$ and 
    $\h^{\past{}}_u(t_{i}) \in \{0,1\}^n$ contains semantics of all subformulas, disjunctively combined over all
    $t_j$ with $j \leq i$. Finally, the output function $\out(x_1, \dotsc, x_{2n}) = \trelu(x_n)$ ensures that $T_\varphi$ outputs the semantics of 
    $\varphi_n = \varphi$.
\end{proof}

\subsection{Proof of Theorem~\ref{thm:globTGNNweak}}
\label{proof:globtgnnweak}
\begin{figure}[h]
    \centering
    \begin{minipage}{0.2\textwidth}
    \centering
            \begin{tikzpicture}[
dot/.style = {draw, circle, minimum size=0.4cm,
              inner sep=0pt, outer sep=0pt},
dot/.default = 6pt
]
\scriptsize
\pgfmathsetmacro{\tline}{-0.25}
\pgfmathsetmacro{\w}{1.8}
\pgfmathsetmacro{\h}{2.0}
\pgfmathsetmacro{\inh}{0.7}
\pgfmathsetmacro{\dist}{1.6}
\pgfmathsetmacro{\Ax}{0.5}
\pgfmathsetmacro{\Ay}{1.4}
\pgfmathsetmacro{\Bx}{0.6}
\pgfmathsetmacro{\By}{0.6}
\pgfmathsetmacro{\Cx}{1}
\pgfmathsetmacro{\Cy}{1.3}
\pgfmathsetmacro{\Dx}{1.1}
\pgfmathsetmacro{\Dy}{2}
\pgfmathsetmacro{\Ex}{1.3}
\pgfmathsetmacro{\Ey}{0.8}


\draw[->] (1.8,\tline) -- (5.1,\tline);
\node at (3.5,\tline-0.3) {$(\TG,v)$};

\foreach \x in {1,...,2}
{
\draw[fill=gray!90!black,opacity=0.2] (\x*\dist,0) -- (\x*\dist+\w,\inh) -- (\x*\dist+\w,\inh+\h) -- (\x*\dist,\h) -- cycle;

\node at (\x*\dist + 0.87 * \w, \inh+ \h - 0.3) {$G_{\x}$};
\node at (\x*\dist + 0.6 * \w, 0) {$t_{\x}=\x$};
\draw[-] (\x*\dist + 0.6 * \w, \tline+0.05) -- (\x*\dist + 0.6 * \w, \tline-0.05);

\node (A\x) at (\x*\dist+\Ax, 0+\Ay) {};
\node (B\x) at (\x*\dist+\Bx, 0+\By) {};
\node (C\x) at (\x*\dist+\Cx, 0+\Cy) {};
\node (D\x) at (\x*\dist+\Dx, 0+\Dy) {};
\node (E\x) at (\x*\dist+\Ex, 0+\Ey) {};
}


\node[dot=9pt,draw=black, fill=myred] at (C1) {$v$};
\node[dot=9pt,draw=black, fill=myred] at (C2) {$v$};
\end{tikzpicture}
    \end{minipage}%
    \hspace{.2\textwidth}
    \begin{minipage}{0.2\textwidth}
    \centering
            \begin{tikzpicture}[
dot/.style = {draw, circle, minimum size=0.4cm,
              inner sep=0pt, outer sep=0pt},
dot/.default = 6pt
]
\scriptsize
\pgfmathsetmacro{\tline}{-0.25}
\pgfmathsetmacro{\w}{1.8}
\pgfmathsetmacro{\h}{2.0}
\pgfmathsetmacro{\inh}{0.7}
\pgfmathsetmacro{\dist}{1.6}
\pgfmathsetmacro{\Ax}{0.5}
\pgfmathsetmacro{\Ay}{1.4}
\pgfmathsetmacro{\Bx}{0.6}
\pgfmathsetmacro{\By}{0.6}
\pgfmathsetmacro{\Cx}{1}
\pgfmathsetmacro{\Cy}{1.3}
\pgfmathsetmacro{\Dx}{1.1}
\pgfmathsetmacro{\Dy}{2}
\pgfmathsetmacro{\Ex}{1.3}
\pgfmathsetmacro{\Ey}{0.8}


\draw[->] (1.8,\tline) -- (5.1,\tline);
\node at (3.5,\tline-0.3) {$(\TG',v')$};

\foreach \x in {1,...,2}
{
\draw[fill=gray!90!black,opacity=0.2] (\x*\dist,0) -- (\x*\dist+\w,\inh) -- (\x*\dist+\w,\inh+\h) -- (\x*\dist,\h) -- cycle;

\node at (\x*\dist + 0.87 * \w, \inh+ \h - 0.3) {$G_{\x}'$};
\node at (\x*\dist + 0.6 * \w, 0) {$t_{\x}=\x$};
\draw[-] (\x*\dist + 0.6 * \w, \tline+0.05) -- (\x*\dist + 0.6 * \w, \tline-0.05);

\node (A\x) at (\x*\dist+\Ax, 0+\Ay) {};
\node (B\x) at (\x*\dist+\Bx, 0+\By) {};
\node (C\x) at (\x*\dist+\Cx, 0+\Cy) {};
\node (D\x) at (\x*\dist+\Dx, 0+\Dy) {};
\node (E\x) at (\x*\dist+\Ex, 0+\Ey) {};
}


\node[dot=9pt,draw=black, fill=mywhite] at (C1) {$v'$};
\node[dot=9pt,draw=black, fill=myred] at (C2) {$v'$};
\end{tikzpicture}
    \end{minipage}
        \caption{Pointed temporal graphs $(\TG,v)$ and $(\TG',v')$, both including two snapshots, used as an counterexample in the proof of Theorem~\ref{thm:globTGNNweak}. 
        Here, colour $c_1$ is denoted by a red filling (applies for node $v$ in $G_1$, $G_2$ and node $v'$ in $G_2'$).}
    \label{fig:glob_ex}
\end{figure}
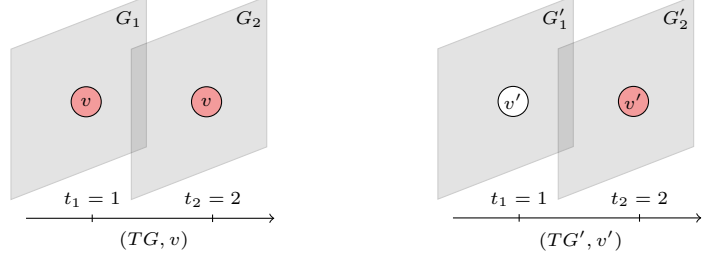

\globTGNNweak*
\begin{proof}
    Let $\varphi = \prev{} c_1$. It is evident that $\varphi$ is satisfied by all pointed
    temporal graphs $(\TG,v)$ of length $n \ge 2$ such that $v$ was of colour $c_1$ at
    timestamp $t_1$.

    Consider the two pointed temporal graphs $(\TG,v)$ and $(\TG',v')$, as specified
    by Figure~\ref{fig:glob_ex}. It is clear that $(\TG,v) \models \varphi$ and
    $(\TG',v') \not\models \varphi$. Now, let $T \in \globTGNN[\mathcal{M},\mathcal{Q}, \circ]$ 
    for some $\mathcal{M}$, $\mathcal{Q}$, and $\circ$. The argument is simple:
    We have $h^{0}_v(t_2) = h^{(0)}_{v'}(t_2)$ in the respective computation of $T(\TG,v)$
    and $T(\TG',v')$, and the input to $\agg$ is the empty set in both cases, meaning that
    its output is $\boldsymbol{0}$ in both cases. Thus, as $T$ either accepts 
    both or none of the temporal graphs $(\TG,v)$ and $(\TG',v')$, meaning that it does not accept 
    the exact set of temporal graphs that satisfy $\varphi$.
\end{proof}

\subsection{Proof of Theorem~\ref{sec:variants_logic2tnn;thm:globaltgnn}}
\label{proof:globtgnn}
In the following result, we utilise time2vec functions \cite{Kazemi2019} in the 
constructed TGNN. A formal definition can be found in Appendix~\ref{app:definitions}.
We remark that in the following result we exclusively utilise the $0$-th element of $\mathsf{t2v}$ functions $\phi$, indicating
that the result is independent of the exact form of activation $\sigma$ used in these functions.
\globaltgnn*
\begin{proof}
    Let $\varphi \in \mathcal{L}_2$ and let 
    \begin{displaymath}
           \varphi_1, \dotsc, \varphi_{m}, \varphi_{m+1}, \dotsc, \varphi_n
    \end{displaymath}
    be an enumeration of the subformulas of $\varphi$ such that all atomic
    formulas are the $\varphi_1, \dotsc, \varphi_m$, and $\varphi_i \in \mathit{sub}(\varphi_j)$
    implies $i \le j$. In particular, we consider an enumeration such that $\varphi_n = \varphi$.

    The global (in time) TGNN $\model_\varphi = (M,\phi,\out) \in \globTGNN[\hat{\mathcal{M}}_\msg, \mathcal{Q}_\mathsf{time2vec}, \con]$, 
    where $\phi$ is some $\mathsf{t2v}$ function with $w_0=1$, $b_0=0$, meaning the $0$-th element is the identity, 
    and $\out(x_1, \dots, x_{2n}) = \trelu(x_n)$. We note that $M \in \hat{\mathcal{M}}_\msg$ which means that aggregation 
    is given by $\sum \msg(x)$, where $\msg$ is realised by some one layer FNN $N_\msg$ with truncated-ReLU activation.
    The MPNN $M$ is build such that for each $\varphi_i$ there is a layer $l^{(i)}$ in the same manner as done in 
    Theorem~\ref{sec:logic2tgnn;thm:ptlk}. However, each layer has input dimension $n$ and output dimension $n$, which 
    stands in contrast to previous constructions. This is due to the fact that, the semantics at previous or past timestamps are 
    respected via aggregation. This works as follows.

    Assume that $\varphi_i$ is Boolean. Then, it is handled in layer $l^{(i)}$ exactly as shown in the proof
    of Theorem~\ref{sec:logic2tgnn;thm:ptlk}. In the case of $\varphi_i = Q\psi$ with $Q \in \{\prev{}, \past{}\}$, due to the way $\mathcal{L}_2$ is
    defined, these subformulas must occur in form of $Q\Diamond\chi$.
    For modal subformulas $\Diamond$, we therefore distinguish three cases, namely that $\prev{}\Diamond\varphi_j$, $\past{}\Diamond\varphi_j$,
    or something else. In the case that $\prev{}\Diamond\varphi_j$, we build $\msg$, represented
    by $N_\msg$, such that it maps dimension $j$ identically if $\phi(t)_0 = -1$; otherwise, it maps
    dimension $j$ to $0$. In the case $\past{}\Diamond\varphi_j$, we build $N_\msg$ such that it maps dimension
    $j$ identically if $\phi(t)_0 \leq -1$; otherwise, it maps dimension $j$ to $0$. In the third case, we build
    $N_\msg$ such that it maps dimension $j$ identically if $\phi(t)_0 = 0$; otherwise, it maps dimension $j$
    to $0$. We remark that this involves simple FNN constructions as done in other studies such as \cite{SL22}.
    Informally, the way we construct $N_\msg$ is to filter the correct temporal information.
    Otherwise, $\Diamond\psi$ is handled as seen in Theorem~\ref{sec:logic2tgnn;thm:ptlk}.

    Correctness is again shown via induction over time and subformulae. The key insight here is that,
    while we do not directly evaluate formulas of the form $Q\psi$ with $Q \in \{\prev{}, \past{}\}$, 
    the definition of $\mathcal{L}_2$ ensures that such formulas are of the form
    $Q\Diamond\psi$. Regarding $Q\Diamond\psi$, the interactions between $\phi$ and $\msg$
    ensure that only the information from the previous (in the case of $Q = \prev{}$) or all previous
    (in the case of $Q = \past{}$) timepoints is considered in the aggregation. Other than this, the
    arguments are the same as in Theorem~\ref{sec:logic2tgnn;thm:ptlk}.
\end{proof}




\newpage
\section*{NeurIPS Paper Checklist}

The checklist is designed to encourage best practices for responsible machine learning research, addressing issues of reproducibility, transparency, research ethics, and societal impact. Do not remove the checklist: {\bf The papers not including the checklist will be desk rejected.} The checklist should follow the references and follow the (optional) supplemental material.  The checklist does NOT count towards the page
limit. 

Please read the checklist guidelines carefully for information on how to answer these questions. For each question in the checklist:
\begin{itemize}
    \item You should answer \answerYes{}, \answerNo{}, or \answerNA{}.
    \item \answerNA{} means either that the question is Not Applicable for that particular paper or the relevant information is Not Available.
    \item Please provide a short (1–2 sentence) justification right after your answer (even for NA). 
\end{itemize}

{\bf The checklist answers are an integral part of your paper submission.} They are visible to the reviewers, area chairs, senior area chairs, and ethics reviewers. You will be asked to also include it (after eventual revisions) with the final version of your paper, and its final version will be published with the paper.

The reviewers of your paper will be asked to use the checklist as one of the factors in their evaluation. While "\answerYes{}" is generally preferable to "\answerNo{}", it is perfectly acceptable to answer "\answerNo{}" provided a proper justification is given (e.g., "error bars are not reported because it would be too computationally expensive" or "we were unable to find the license for the dataset we used"). In general, answering "\answerNo{}" or "\answerNA{}" is not grounds for rejection. While the questions are phrased in a binary way, we acknowledge that the true answer is often more nuanced, so please just use your best judgment and write a justification to elaborate. All supporting evidence can appear either in the main paper or the supplemental material, provided in appendix. If you answer \answerYes{} to a question, in the justification please point to the section(s) where related material for the question can be found.

IMPORTANT, please:
\begin{itemize}
    \item {\bf Delete this instruction block, but keep the section heading ``NeurIPS paper checklist"},
    \item  {\bf Keep the checklist subsection headings, questions/answers and guidelines below.}
    \item {\bf Do not modify the questions and only use the provided macros for your answers}.
\end{itemize}


\begin{enumerate}

    \item {\bf Claims}
        \item[] Question: Do the main claims made in the abstract and introduction accurately reflect the paper's contributions and scope?
        \item[] Answer: \answerYes{} 
        \item[] Justification: The claims made in the abstract and introduction are covered by our core results. The detailed 
        results are outlined in the ``Our contribution.'' subsection of the 
        introduction.
        \item[] Guidelines:
        \begin{itemize}
            \item The answer NA means that the abstract and introduction do not include the claims made in the paper.
            \item The abstract and/or introduction should clearly state the claims made, including the contributions made in the paper and important assumptions and limitations. A No or NA answer to this question will not be perceived well by the reviewers. 
            \item The claims made should match theoretical and experimental results, and reflect how much the results can be expected to generalize to other settings. 
            \item It is fine to include aspirational goals as motivation as long as it is clear that these goals are not attained by the paper. 
        \end{itemize}
    
    \item {\bf Limitations}
        \item[] Question: Does the paper discuss the limitations of the work performed by the authors?
        \item[] Answer: \answerYes{} 
        \item[] Justification: See Section~\ref{sec:outlook} as well as remarks in the pre- or posttext of specific results and definitions. 
        This especially includes the fact that all models and logics are defined rigorously.
        \item[] Guidelines:
        \begin{itemize}
            \item The answer NA means that the paper has no limitation while the answer No means that the paper has limitations, but those are not discussed in the paper. 
            \item The authors are encouraged to create a separate "Limitations" section in their paper.
            \item The paper should point out any strong assumptions and how robust the results are to violations of these assumptions (e.g., independence assumptions, noiseless settings, model well-specification, asymptotic approximations only holding locally). The authors should reflect on how these assumptions might be violated in practice and what the implications would be.
            \item The authors should reflect on the scope of the claims made, e.g., if the approach was only tested on a few datasets or with a few runs. In general, empirical results often depend on implicit assumptions, which should be articulated.
            \item The authors should reflect on the factors that influence the performance of the approach. For example, a facial recognition algorithm may perform poorly when image resolution is low or images are taken in low lighting. Or a speech-to-text system might not be used reliably to provide closed captions for online lectures because it fails to handle technical jargon.
            \item The authors should discuss the computational efficiency of the proposed algorithms and how they scale with dataset size.
            \item If applicable, the authors should discuss possible limitations of their approach to address problems of privacy and fairness.
            \item While the authors might fear that complete honesty about limitations might be used by reviewers as grounds for rejection, a worse outcome might be that reviewers discover limitations that aren't acknowledged in the paper. The authors should use their best judgment and recognize that individual actions in favor of transparency play an important role in developing norms that preserve the integrity of the community. Reviewers will be specifically instructed to not penalize honesty concerning limitations.
        \end{itemize}
    
    \item {\bf Theory assumptions and proofs}
        \item[] Question: For each theoretical result, does the paper provide the full set of assumptions and a complete (and correct) proof?
        \item[] Answer: \answerYes{} 
        \item[] Justification: For each result, we included a proof sketch in the main body and a full proof in the appendix. Furthermore, we defined each model and framework rigorously.
        \item[] Guidelines:
        \begin{itemize}
            \item The answer NA means that the paper does not include theoretical results. 
            \item All the theorems, formulas, and proofs in the paper should be numbered and cross-referenced.
            \item All assumptions should be clearly stated or referenced in the statement of any theorems.
            \item The proofs can either appear in the main paper or the supplemental material, but if they appear in the supplemental material, the authors are encouraged to provide a short proof sketch to provide intuition. 
            \item Inversely, any informal proof provided in the core of the paper should be complemented by formal proofs provided in appendix or supplemental material.
            \item Theorems and Lemmas that the proof relies upon should be properly referenced. 
        \end{itemize}
    
        \item {\bf Experimental result reproducibility}
        \item[] Question: Does the paper fully disclose all the information needed to reproduce the main experimental results of the paper to the extent that it affects the main claims and/or conclusions of the paper (regardless of whether the code and data are provided or not)?
        \item[] Answer: \answerNA{} 
        \item[] Justification:
        \item[] Guidelines:
        \begin{itemize}
            \item The answer NA means that the paper does not include experiments.
            \item If the paper includes experiments, a No answer to this question will not be perceived well by the reviewers: Making the paper reproducible is important, regardless of whether the code and data are provided or not.
            \item If the contribution is a dataset and/or model, the authors should describe the steps taken to make their results reproducible or verifiable. 
            \item Depending on the contribution, reproducibility can be accomplished in various ways. For example, if the contribution is a novel architecture, describing the architecture fully might suffice, or if the contribution is a specific model and empirical evaluation, it may be necessary to either make it possible for others to replicate the model with the same dataset, or provide access to the model. In general. releasing code and data is often one good way to accomplish this, but reproducibility can also be provided via detailed instructions for how to replicate the results, access to a hosted model (e.g., in the case of a large language model), releasing of a model checkpoint, or other means that are appropriate to the research performed.
            \item While NeurIPS does not require releasing code, the conference does require all submissions to provide some reasonable avenue for reproducibility, which may depend on the nature of the contribution. For example
            \begin{enumerate}
                \item If the contribution is primarily a new algorithm, the paper should make it clear how to reproduce that algorithm.
                \item If the contribution is primarily a new model architecture, the paper should describe the architecture clearly and fully.
                \item If the contribution is a new model (e.g., a large language model), then there should either be a way to access this model for reproducing the results or a way to reproduce the model (e.g., with an open-source dataset or instructions for how to construct the dataset).
                \item We recognize that reproducibility may be tricky in some cases, in which case authors are welcome to describe the particular way they provide for reproducibility. In the case of closed-source models, it may be that access to the model is limited in some way (e.g., to registered users), but it should be possible for other researchers to have some path to reproducing or verifying the results.
            \end{enumerate}
        \end{itemize}

    \item {\bf Open access to data and code}
        \item[] Question: Does the paper provide open access to the data and code, with sufficient instructions to faithfully reproduce the main experimental results, as described in supplemental material?
        \item[] Answer: \answerNA{} 
        \item[] Justification:
        \item[] Guidelines:
        \begin{itemize}
            \item The answer NA means that paper does not include experiments requiring code.
            \item Please see the NeurIPS code and data submission guidelines (\url{https://nips.cc/public/guides/CodeSubmissionPolicy}) for more details.
            \item While we encourage the release of code and data, we understand that this might not be possible, so “No” is an acceptable answer. Papers cannot be rejected simply for not including code, unless this is central to the contribution (e.g., for a new open-source benchmark).
            \item The instructions should contain the exact command and environment needed to run to reproduce the results. See the NeurIPS code and data submission guidelines (\url{https://nips.cc/public/guides/CodeSubmissionPolicy}) for more details.
            \item The authors should provide instructions on data access and preparation, including how to access the raw data, preprocessed data, intermediate data, and generated data, etc.
            \item The authors should provide scripts to reproduce all experimental results for the new proposed method and baselines. If only a subset of experiments are reproducible, they should state which ones are omitted from the script and why.
            \item At submission time, to preserve anonymity, the authors should release anonymized versions (if applicable).
            \item Providing as much information as possible in supplemental material (appended to the paper) is recommended, but including URLs to data and code is permitted.
        \end{itemize}

    \item {\bf Experimental setting/details}
        \item[] Question: Does the paper specify all the training and test details (e.g., data splits, hyperparameters, how they were chosen, type of optimizer, etc.) necessary to understand the results?
        \item[] Answer: \answerNA{} 
        \item[] Justification: 
        \item[] Guidelines:
        \begin{itemize}
            \item The answer NA means that the paper does not include experiments.
            \item The experimental setting should be presented in the core of the paper to a level of detail that is necessary to appreciate the results and make sense of them.
            \item The full details can be provided either with the code, in appendix, or as supplemental material.
        \end{itemize}
    
    \item {\bf Experiment statistical significance}
        \item[] Question: Does the paper report error bars suitably and correctly defined or other appropriate information about the statistical significance of the experiments?
        \item[] Answer: \answerNA{} 
        \item[] Justification:
        \item[] Guidelines:
        \begin{itemize}
            \item The answer NA means that the paper does not include experiments.
            \item The authors should answer "Yes" if the results are accompanied by error bars, confidence intervals, or statistical significance tests, at least for the experiments that support the main claims of the paper.
            \item The factors of variability that the error bars are capturing should be clearly stated (for example, train/test split, initialization, random drawing of some parameter, or overall run with given experimental conditions).
            \item The method for calculating the error bars should be explained (closed form formula, call to a library function, bootstrap, etc.)
            \item The assumptions made should be given (e.g., Normally distributed errors).
            \item It should be clear whether the error bar is the standard deviation or the standard error of the mean.
            \item It is OK to report 1-sigma error bars, but one should state it. The authors should preferably report a 2-sigma error bar than state that they have a 96\% CI, if the hypothesis of Normality of errors is not verified.
            \item For asymmetric distributions, the authors should be careful not to show in tables or figures symmetric error bars that would yield results that are out of range (e.g. negative error rates).
            \item If error bars are reported in tables or plots, The authors should explain in the text how they were calculated and reference the corresponding figures or tables in the text.
        \end{itemize}
    
    \item {\bf Experiments compute resources}
        \item[] Question: For each experiment, does the paper provide sufficient information on the computer resources (type of compute workers, memory, time of execution) needed to reproduce the experiments?
        \item[] Answer: \answerNA{} 
        \item[] Justification: 
        \item[] Guidelines:
        \begin{itemize}
            \item The answer NA means that the paper does not include experiments.
            \item The paper should indicate the type of compute workers CPU or GPU, internal cluster, or cloud provider, including relevant memory and storage.
            \item The paper should provide the amount of compute required for each of the individual experimental runs as well as estimate the total compute. 
            \item The paper should disclose whether the full research project required more compute than the experiments reported in the paper (e.g., preliminary or failed experiments that didn't make it into the paper). 
        \end{itemize}
        
    \item {\bf Code of ethics}
        \item[] Question: Does the research conducted in the paper conform, in every respect, with the NeurIPS Code of Ethics \url{https://neurips.cc/public/EthicsGuidelines}?
        \item[] Answer: \answerYes{} 
        \item[] Justification: 
        \item[] Guidelines:
        \begin{itemize}
            \item The answer NA means that the authors have not reviewed the NeurIPS Code of Ethics.
            \item If the authors answer No, they should explain the special circumstances that require a deviation from the Code of Ethics.
            \item The authors should make sure to preserve anonymity (e.g., if there is a special consideration due to laws or regulations in their jurisdiction).
        \end{itemize}

    \item {\bf Broader impacts}
        \item[] Question: Does the paper discuss both potential positive societal impacts and negative societal impacts of the work performed?
        \item[] Answer: \answerNA{} 
        \item[] Justification:
        \item[] Guidelines:
        \begin{itemize}
            \item The answer NA means that there is no societal impact of the work performed.
            \item If the authors answer NA or No, they should explain why their work has no societal impact or why the paper does not address societal impact.
            \item Examples of negative societal impacts include potential malicious or unintended uses (e.g., disinformation, generating fake profiles, surveillance), fairness considerations (e.g., deployment of technologies that could make decisions that unfairly impact specific groups), privacy considerations, and security considerations.
            \item The conference expects that many papers will be foundational research and not tied to particular applications, let alone deployments. However, if there is a direct path to any negative applications, the authors should point it out. For example, it is legitimate to point out that an improvement in the quality of generative models could be used to generate deepfakes for disinformation. On the other hand, it is not needed to point out that a generic algorithm for optimizing neural networks could enable people to train models that generate Deepfakes faster.
            \item The authors should consider possible harms that could arise when the technology is being used as intended and functioning correctly, harms that could arise when the technology is being used as intended but gives incorrect results, and harms following from (intentional or unintentional) misuse of the technology.
            \item If there are negative societal impacts, the authors could also discuss possible mitigation strategies (e.g., gated release of models, providing defenses in addition to attacks, mechanisms for monitoring misuse, mechanisms to monitor how a system learns from feedback over time, improving the efficiency and accessibility of ML).
        \end{itemize}
        
    \item {\bf Safeguards}
        \item[] Question: Does the paper describe safeguards that have been put in place for responsible release of data or models that have a high risk for misuse (e.g., pretrained language models, image generators, or scraped datasets)?
        \item[] Answer: \answerNA{} 
        \item[] Justification:
        \item[] Guidelines:
        \begin{itemize}
            \item The answer NA means that the paper poses no such risks.
            \item Released models that have a high risk for misuse or dual-use should be released with necessary safeguards to allow for controlled use of the model, for example by requiring that users adhere to usage guidelines or restrictions to access the model or implementing safety filters. 
            \item Datasets that have been scraped from the Internet could pose safety risks. The authors should describe how they avoided releasing unsafe images.
            \item We recognize that providing effective safeguards is challenging, and many papers do not require this, but we encourage authors to take this into account and make a best faith effort.
        \end{itemize}
    
    \item {\bf Licenses for existing assets}
        \item[] Question: Are the creators or original owners of assets (e.g., code, data, models), used in the paper, properly credited and are the license and terms of use explicitly mentioned and properly respected?
        \item[] Answer: \answerNA{} 
        \item[] Justification:
        \item[] Guidelines:
        \begin{itemize}
            \item The answer NA means that the paper does not use existing assets.
            \item The authors should cite the original paper that produced the code package or dataset.
            \item The authors should state which version of the asset is used and, if possible, include a URL.
            \item The name of the license (e.g., CC-BY 4.0) should be included for each asset.
            \item For scraped data from a particular source (e.g., website), the copyright and terms of service of that source should be provided.
            \item If assets are released, the license, copyright information, and terms of use in the package should be provided. For popular datasets, \url{paperswithcode.com/datasets} has curated licenses for some datasets. Their licensing guide can help determine the license of a dataset.
            \item For existing datasets that are re-packaged, both the original license and the license of the derived asset (if it has changed) should be provided.
            \item If this information is not available online, the authors are encouraged to reach out to the asset's creators.
        \end{itemize}
    
    \item {\bf New assets}
        \item[] Question: Are new assets introduced in the paper well documented and is the documentation provided alongside the assets?
        \item[] Answer: \answerNA{} 
        \item[] Justification: 
        \item[] Guidelines:
        \begin{itemize}
            \item The answer NA means that the paper does not release new assets.
            \item Researchers should communicate the details of the dataset/code/model as part of their submissions via structured templates. This includes details about training, license, limitations, etc. 
            \item The paper should discuss whether and how consent was obtained from people whose asset is used.
            \item At submission time, remember to anonymize your assets (if applicable). You can either create an anonymized URL or include an anonymized zip file.
        \end{itemize}
    
    \item {\bf Crowdsourcing and research with human subjects}
        \item[] Question: For crowdsourcing experiments and research with human subjects, does the paper include the full text of instructions given to participants and screenshots, if applicable, as well as details about compensation (if any)? 
        \item[] Answer: \answerNA{} 
        \item[] Justification:
        \item[] Guidelines:
        \begin{itemize}
            \item The answer NA means that the paper does not involve crowdsourcing nor research with human subjects.
            \item Including this information in the supplemental material is fine, but if the main contribution of the paper involves human subjects, then as much detail as possible should be included in the main paper. 
            \item According to the NeurIPS Code of Ethics, workers involved in data collection, curation, or other labor should be paid at least the minimum wage in the country of the data collector. 
        \end{itemize}
    
    \item {\bf Institutional review board (IRB) approvals or equivalent for research with human subjects}
        \item[] Question: Does the paper describe potential risks incurred by study participants, whether such risks were disclosed to the subjects, and whether Institutional Review Board (IRB) approvals (or an equivalent approval/review based on the requirements of your country or institution) were obtained?
        \item[] Answer: \answerNA{} 
        \item[] Justification: 
        \item[] Guidelines:
        \begin{itemize}
            \item The answer NA means that the paper does not involve crowdsourcing nor research with human subjects.
            \item Depending on the country in which research is conducted, IRB approval (or equivalent) may be required for any human subjects research. If you obtained IRB approval, you should clearly state this in the paper. 
            \item We recognize that the procedures for this may vary significantly between institutions and locations, and we expect authors to adhere to the NeurIPS Code of Ethics and the guidelines for their institution. 
            \item For initial submissions, do not include any information that would break anonymity (if applicable), such as the institution conducting the review.
        \end{itemize}
    
    \item {\bf Declaration of LLM usage}
        \item[] Question: Does the paper describe the usage of LLMs if it is an important, original, or non-standard component of the core methods in this research? Note that if the LLM is used only for writing, editing, or formatting purposes and does not impact the core methodology, scientific rigorousness, or originality of the research, declaration is not required.
        \item[] Answer: \answerNA{} 
        \item[] Justification:
        \item[] Guidelines:
        \begin{itemize}
            \item The answer NA means that the core method development in this research does not involve LLMs as any important, original, or non-standard components.
            \item Please refer to our LLM policy (\url{https://neurips.cc/Conferences/2025/LLM}) for what should or should not be described.
        \end{itemize}
    
    \end{enumerate}

\end{document}